\newcommand{\alg}{TACS\xspace}
\newcommand{\fname}{Time-Aware Conditional Synthesis\xspace}
\newcommand{\guidance}{online guidance\xspace}
\newcommand{\timepredictor}{time predictor\xspace}
\newtheorem{theorem}{Theorem}
\newtheorem{definition}{Definition}
\newtheorem{proposition}{Proposition}
\title{Conditional Synthesis of 3D Molecules \\ with Time Correction Sampler}
\author{Hojung Jung${}^{1}$\thanks{Equal contribution} \quad Youngrok Park${}^{1}$\footnotemark[1] \quad  Laura Schmid${}^{1}$  \quad  Jaehyeong Jo${}^{1}$ \vspace{2pt} \\ \textbf{Dongkyu Lee}${}^{2}$ \quad  \textbf{Bongsang Kim}${}^{2}$ \quad \textbf{Se-Young Yun}${}^{1}$\thanks{Corresponding authors} \quad \textbf{Jinwoo Shin}${}^{1}$\footnotemark[2] \quad  \\ KAIST AI${}^{1}$ \qquad LG Electronics${}^{2}$ \\
\texttt{\{ghwjd7281, yr-park, yunseyoung, jinwooshin\}@kaist.ac.kr}\\
}
\begin{document}

\maketitle

\begin{abstract}
Diffusion models have demonstrated remarkable success in various domains, including molecular generation. However, conditional molecular generation remains a fundamental challenge due to an intrinsic trade-off between targeting specific chemical properties and generating meaningful samples from the data distribution. In this work, we present Time-Aware Conditional Synthesis (TACS), a novel approach to conditional generation on diffusion models. It integrates adaptively controlled plug-and-play "online" guidance into a diffusion model, driving samples toward the desired properties while maintaining validity and stability. A key component of our algorithm is our new type of diffusion sampler, Time Correction Sampler (TCS), which is used to control guidance and ensure that the generated molecules remain on the correct manifold at each reverse step of the diffusion process at the same time. Our proposed method demonstrates significant performance in conditional 3D molecular generation and offers a promising approach towards inverse molecular design, potentially facilitating advancements in drug discovery, materials science, and other related fields.
\end{abstract}

\section{Introduction}
\label{sec:introduction}
 
Discovering molecules with specific target properties is a fundamental challenge in modern chemistry, with significant implications for various domains such as drug discovery and materials science~\citep{sliwoski2014computational,pyzer2015high,butler2018machine}. While diffusion models have shown great success in the generation of real-world molecules~\citep{watson2023novo,krishna2024generalized}, their primary goal is often simply to generate realistic molecules without considering specific properties, which can lead to producing molecules with undesirable chemical properties. 

Existing works address this issue by leveraging controllable diffusion frameworks to generate molecules with desired properties~\citep{hoogeboom2022equivariant,bao2022equivariant}.
One approach is to use classifier guidance~\citep{vignac2023digress}, which utilizes auxiliary trained classifiers to guide the diffusion process~\citep{bao2022equivariant}. An alternative is to use classifier-free guidance (CFG)~\citep{ho2022classifier},  which directly trains the diffusion models on condition-labeled data. 
While both approaches can generate stable molecules, they struggle to generate truly desirable molecules due to the complex structures and the discrete nature of atomic features~\citep{jensen2017introduction}.

On the other hand, recent works~\citep{song2023loss,han2023training} have introduced training-free guidance for controllable generation in a plug-and-play manner, which we hereafter refer to as \guidance (OG).
This approach can directly estimate the conditional score with unconditional diffusion model.
However, our analysis shows that applying \guidance into the molecular generation can result in generating samples with significantly low molecular stability and validity due to the stepwise enforcement of specific conditions without considering the original distribution at each timestep.

To address this gap, we propose \emph{Time-Aware Conditional Synthesis} (\alg), a novel framework for generating 3D molecules. \alg utilizes the \guidance in tandem with a novel diffusion sampling technique that we call \emph{Time Correction Sampler} (TCS). TCS explicitly considers the possibility that \guidance can steer the generated sample away from the desired data manifold at each step of the diffusion model's denoising process, resulting in an `effective timestep' that does not match the correct timestep during the generation. TCS then corrects this mismatch between the timesteps, thereby effectively preventing samples from deviating from the target distribution while ensuring they satisfy the desired conditions. By combining online guidance with TCS and integrating them into a diffusion model, \alg allows generated samples to strike a balance between approaching the target property and remaining faithful to the target distribution throughout the denoising process.

To the best of our knowledge, \alg is the first diffusion framework that simultaneously addresses inverse molecular design and data consistency, two critical objectives that often conflict. 

We summarize our main contributions as follows:
\begin{itemize}[itemsep=0.5mm, parsep=4pt, leftmargin=8mm]
    \item We propose Time-Aware Conditional Synthesis (TACS), a new framework for diffusion model that utilizes adaptively corrected online guidance during the generation.
    \item We introduce Time Correction Sampler (TCS), a novel diffusion sampling technique that ensures the generated samples remain faithful to the data distribution. It includes a \timepredictor, an equivariant graph neural network that accurately estimates the correct data manifold during inference by predicting the time information of the generation process. 
    \item Through extensive experiments on a 3D molecule dataset, we demonstrate that \alg outperforms previous state-of-the-art methods by producing samples that closely match the desired quantum chemical properties while maintaining data consistency.
\end{itemize}

\section{Related Works}
\label{sec:related_works}

\begin{figure}[t]
\centering
\includegraphics[width=1\linewidth]{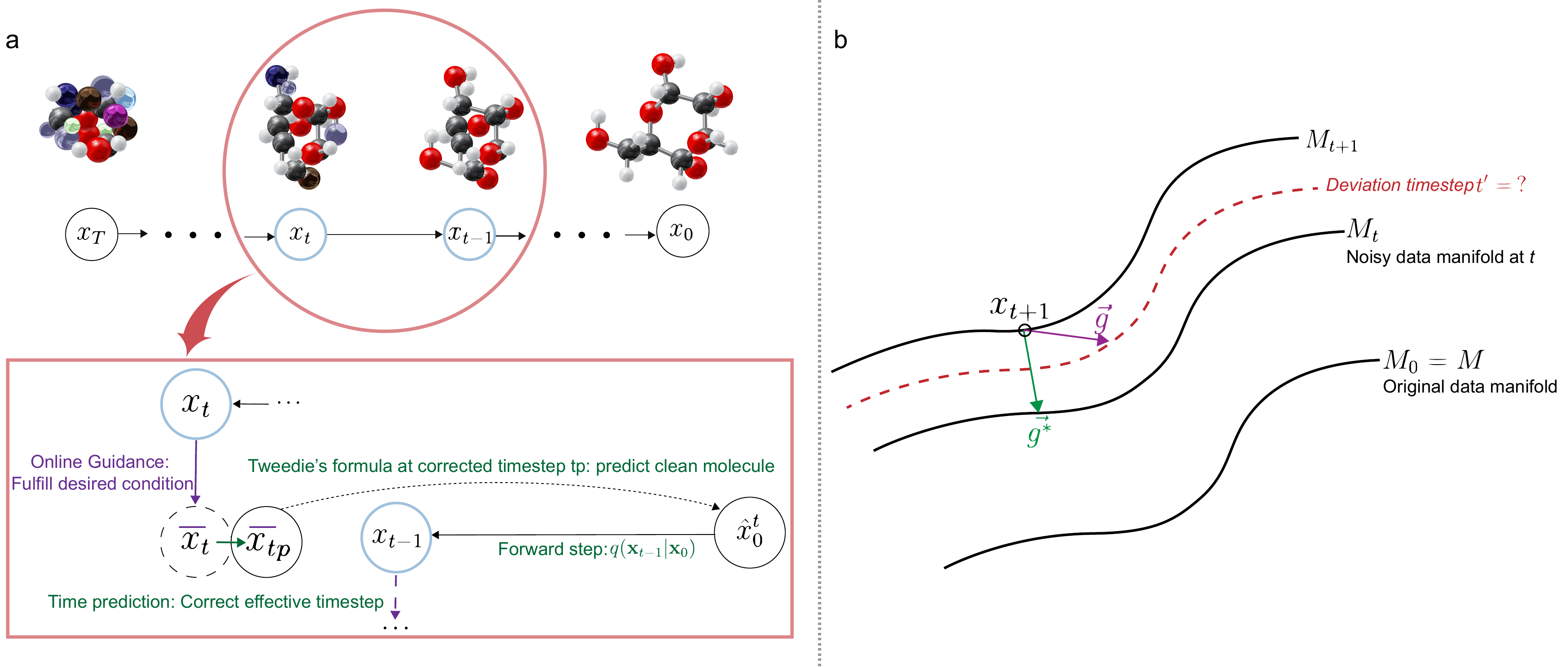}

\caption{\textbf{(a)} Overview of Time-Aware Conditional Synthesis (TACS). \alg helps generate high-quality samples that match target condition while following basic properties of the molecules. At each timestep $t$, online guidance is applied to push $x_t$ towards the desired condition. Time Predictor finds the desired timestep $t_p$ for $x_t$ after applying the guidance. Using predicted timestep $t_p$, Tweedie's formula is used to predict the clean molecule $\hat{x}^t_0$. Finally, forward process $q(x_{t-1}|x_0)$ is applied to proceed to the next denoising step of $t-1$. \textbf{(b)} Motivation for TACS. Applying online guidance ($\vec{g}$, purple) can shift the generated samples away from the correct data manifold corresponding to the current timestep. This undesirable deviation (red) can be avoided by using time correction to first measure the deviated timestep $t'$, then adjusting the guidance to get corrected guidance vector ($\vec{g*}$, green), which keeps the generated samples stay on the correct data manifold.}
\label{fig:Main_overall_alg}
\end{figure}

\paragraph{Diffusion models}

Diffusion models have achieved great success in a variety of domains, including generation of images~\citep{dhariwal2021diffusion,rombach2022high}, audio generation~\citep{kong2020diffwave}, videos~\citep{ho2022video,liu2024sora, polyak2024movie}, and point clouds~\citep{point-clouds1,point-clouds2}. A particular highlight in the success of diffusion models is their potential to generate molecules that can form the basis of new, previously unseen, medical compounds. Multiple approaches have been explored to achieve this. For instance, graph diffusion such as GeoDiff~\citep{xu2022geodiff}, GDSS~\citep{jo2022score}, and DiGress~\citep{vignac2023digress} can generate graph structures that correspond to molecular candidates. 
Additionally, some approaches incorporate chemical knowledge tailored to specific applications, such as RFdiffusion for protein design~\citep{watson2023novo}, a method based on the RoseTTAFold structure prediction network~\citep{krishna2024generalized}.
Other previous literature considers different domain-specific applications, such as diffusion for molecular docking~\citep{corso2023diffdock}, or molecular conformer generation~\citep{jing2022torsional}.
Most relevant to our work, diffusion models have also shown promising results in synthesizing 3D molecules~\citep{hoogeboom2022equivariant, xu2023geometric,jo2023graph}, generating stable and valid 3D structures. 

\paragraph{Conditional molecular generation}
Deep generative models~\citep{luo2022autoregressive,hoogeboom2022equivariant,xu2023geometric} have made considerable progress in synthesizing 3D molecules with specific properties. 
Specifically, conditional diffusion models~\citep{hoogeboom2022equivariant,bao2022equivariant,xu2023geometric,han2023training} have achieved noticeable improvements in synthesizing realistic molecules.
EDM~\citep{hoogeboom2022equivariant} trains separate conditional diffusion models for each type of chemical condition, while EEGSDE~\citep{bao2022equivariant} trains an additional energy-based model to provide conditional guidance during the inference. 
GeoLDM~\citep{xu2023geometric} utilizes a latent diffusion model~\citep{rombach2022high} to run the diffusion process in the latent space. 
MuDM~\citep{han2023training} applies online guidance to simultaneously target multiple properties.
However, existing methods either produce unstable and invalid molecular structures or are unable to accurately meet the target conditions. 
To overcome these limitations, we propose TACS, a novel framework which ensures the generative process remains faithful to the learned marginal distributions in each timestep while effectively guiding the samples to meet the desired quantum chemical properties. 
\section{Preliminaries}
\label{sec:preliminaries}

\paragraph{Diffusion models}\hspace{-3mm}
Diffusion models~\citep{sohl2015deep, ho2020denoising, song2020score} are a type of generative model that learn to reverse a multi-step forward noising process applied to the given data. In the forward process, noise is gradually injected into the ground truth data, $\mathbf{x_0}\sim p_{0}$, until it becomes perturbed into random noise, $\mathbf{x}_T\sim\mathcal{N}(0,\mathbf{I})$, where $T$ is the total number of diffusion steps. We follow the Variance Preserving stochastic differential equation (VP-SDE)~\citep{song2020score,ho2020denoising} where the forward process is modeled by the following SDE:
\begin{align}
\label{eq:SDE_forward}
    \mathrm{d}\mathbf{x}_t = -\frac{1}{2}\beta(t)\mathbf{x}_t \,\mathrm{d}t + \sqrt{\beta(t)}\mathrm{d}\mathbf{w}_t,
\end{align}
where $\beta(t)$ is a pre-defined noise schedule and $\mathbf{w}_t$ is a standard Wiener process. Then, the reverse of the forward process in Eq.~\eqref{eq:SDE_forward} is also a diffusion process that is modeled by the following SDE~\citep{anderson1982reverse}:
\begin{align}
\label{eq:SDE_reverse}
    \mathrm{d}\mathbf{x}_t =\left[-\frac{1}{2}\beta(t)\mathbf{x}_t -\beta(t)\nabla_{\mathbf{x}_t}\log{p_{t}(\mathbf{x}_t)}\right]\mathrm{d}t+\sqrt{\beta(t)}\mathrm{d}\mathbf{\bar{w}}_t,
\end{align}
where $p_t$ is the probability density of $\mathbf{x}_t$ and $\mathbf{\bar{w}}_t$ is a standard Wiener process with backward time flows. 
The reverse process in Eq.~\eqref{eq:SDE_reverse} can be used as a generative model when the score function $\nabla_{\mathbf{x}}\log{p_{t}(\mathbf{x})}$ is known. To estimate the score function from given data, a neural network $\boldsymbol{s_{\theta}}$ is trained to minimize the following objective~\citep{song2020score}:
\vspace{2pt}
\begin{align}
\label{eq:diffusion_training}
     \mathcal{L}(\boldsymbol{\theta}) = \mathbb{E}_{t,\mathbf{x}_0} \lambda(t)\|\boldsymbol{s_{\theta}}(\mathbf{x_t},t)-\nabla_{\mathbf{x}_t}\log{p_{t}(\mathbf{x}_t)}\|_2^2,
\end{align}
where $t\!\sim\!U[0,T]$, $\mathbf{x}_0$ is sampled from the data distribution, and  $\lambda:[0,T]\rightarrow \mathbb{R}^{+}$ is a positive weight function.

\paragraph{Online guidance}

Recent works~\citep{graikos2022diffusion,chung2022diffusion,song2023loss} leverage a conditional diffusion process where the conditional probability $p_t(\mathbf{x}_t|\mathbf{c})$ is modeled without training on labeled pairs $(\mathbf{x}_t,\mathbf{c})$. To understand this approach,  observe that for the conditional generation, the reverse SDE of Eq.~\eqref{eq:SDE_reverse} can be rewritten as follows:
\begin{align}
\label{eq:Reverse_SDE_coniditional}
     \mathrm{d}\mathbf{x}_t = \left[-\frac{1}{2}\beta(t)\mathbf{x}_t-\beta(t)\nabla_{\mathbf{x}_t}\log{p_{t}(\mathbf{x}_t|\mathbf{c})}\right]\mathrm{d}t+\sqrt{\beta(t)}\mathrm{d}\mathbf{\bar{w}}_t.
\end{align}
From Bayes' rule, the conditional score $\nabla_{\mathbf{x}}\log{p_{t}(\mathbf{x}_t|\mathbf{c})}$ can be decomposed as follows:
\begin{align}
\label{eq:CFG_Bayesrule}
    \nabla_{\mathbf{x}_t}\log{p_{t}(\mathbf{x}_t|\mathbf{c})} = \nabla_{\mathbf{x}_t}\log{p_{t}(\mathbf{x}_t)} + \nabla_{\mathbf{x}_t}\log{p_{t}(\mathbf{c}|\mathbf{x}_t)}.
\end{align}
While unconditional score $\nabla_{\mathbf{x}_{t}}{\log{p_{t}(\mathbf{x}_t)}}$ can be approximated by the unconditional diffusion model $\boldsymbol{s_{\theta}}$ from Eq.~\eqref{eq:diffusion_training}, online guidance can estimate conditional score part $\nabla_{\mathbf{x}_t}\log{p(\mathbf{c}|\mathbf{x}_t)}$ without any training. Notably, DPS~\citep{chung2022diffusion} approximates conditional score as follows:
\begin{align}
\label{eq:conditional_score_trainingfree}
    \nabla_{\mathbf{x}_t} \log{p(\mathbf{c}|\mathbf{x}_t)} \approx \nabla_{\mathbf{x}_t} \log{p(\mathbf{c}|\hat{\mathbf{x}}_0)},
\end{align}
where $\hat{\mathbf{x}}_0$ is a point estimation of the final clean data using Tweedie's formula~\citep{efron2011tweedie}:
\begin{align}
\label{eq:Tweedie}
    \hat{\mathbf{x}}_0 = \frac{\mathbf{x}_t + ({1-\Bar{\alpha}_t})\nabla_{\mathbf{x}_t}\log{p(\mathbf{x}_t)} }{\sqrt{\Bar{\alpha}_t}}, \;\; \Bar{\alpha}_t =1-e^{-\frac{1}{2}\int_{0}^{t}{\beta(s)ds}}.
\end{align}
While DPS uses a point estimate of $\mathbf{x}_0$, Loss Guided Diffusion (LGD)~\citep{song2023loss} uses Bayesian assumption where $\mathbf{x}_0$ is a Monte Carlo estimation of $q(\mathbf{x}_0|\mathbf{x}_t)$, which is a normal distribution with mean $\hat{\mathbf{x}}_0$ and variance $\sigma_t^2$ which is a hyperparameter. 

For a given property estimator $\mathcal{A}$ satisfying $\mathbf{c}=\mathcal{A}(\mathbf{x}_0)$, Eq.~\eqref{eq:conditional_score_trainingfree} can be written as follows:
\begin{align}
\label{eq:online_guidance}
    \nabla_{\mathbf{x}_t} \log{ \mathbb{E}_{\mathbf{x}_0 \sim p\left(\mathbf{x}_0|\mathbf{x}_t\right)}  p(\mathbf{c}|\hat{\mathbf{x}}_0)} \approx \nabla_{\mathbf{x}_t}\log\left(\frac{1}{m}\sum_{i=1}^{m}\exp\left(-\mathcal{L}(\mathcal{A}(\mathbf{x}_0^i),\mathbf{c}\right)\right) \eqqcolon  \mathbf{g}(\mathbf{x}_t,t),
\end{align}
where $\mathcal{L}$ is a differentiable loss function and $\mathbf{x}_0^i$ are independent variables sampled from $q(\mathbf{x}_0|\mathbf{x}_t)$. 
Using this approximation, we can replace $\nabla_{\mathbf{x}_t} \log{p(\mathbf{c}|\mathbf{x}_t)}$ in Eq.~\eqref{eq:CFG_Bayesrule} and conditional generation process now becomes:
\begin{align}
\label{eq:OG_reverse_SDE}
     \mathrm{d}\mathbf{x}_t = \left[-\frac{1}{2}\beta(t)\mathbf{x}_t - \beta(t)(\nabla_{\mathbf{x}_t}\log{p_{t}(\mathbf{x}_t)} + z\mathbf{g}(\mathbf{x}_t,t)\right]\mathrm{d}t+\sqrt{\beta(t)}\mathrm{d}\mathbf{\bar{w}}_t,
\end{align}
where $z$ is the hyperparameter for controlling online guidance strength. 
In this work, we use $L_2$ loss for the differentiable loss $\mathcal{L}$ and set $m=1$ following \citet{song2023loss}.

\paragraph{3D representations of molecules}
In our framework, we follow EDM \cite{hoogeboom2022equivariant} in modeling the distribution of atomic coordinates $\mathbf{x}$ within a linear subspace $\mathcal{X}$, where the center of mass is constrained to zero.
This allows for translation-invariant modeling of molecular geometries. 
Further details of the zero-center-of-mass subspace is provided in Appendix~\ref{app:com_subspace}.

\section{\fname}
\label{sec:method}

In this section, we propose our framework, \fname (\alg). Section~\ref{subsec:time_corrected_sampler}  presents the key component of TACS: the Time Correction Sampler, a novel sampling technique that leverages corrected time information during the generation process. Section~\ref{subsec:MoTS} introduces the overall framework of TACS, which accurately integrates \guidance into the diffusion process using our sampling technique to generate stable and valid molecules that meet the target conditions.

\subsection{Time Correction Sampler}
\label{subsec:time_corrected_sampler}
Diffusion models are known to have an inherent bias, referred to as exposure bias, where the marginal distributions of the forward process do not match the learned marginal distributions of the backward process~\citep{ning2023elucidating}. 
To mitigate exposure bias in the diffusion models, we propose the Time Correction Sampler (TCS), which consists of two parts: time prediction and time correction. 

\paragraph{Time Predictor}
During conditional generation process of diffusion model, a sample follows the reverse SDE as in Eq.~\eqref{eq:Reverse_SDE_coniditional}. However, due to error accumulation during the generation process~\citep{lee2022convergence,benton2024nearly}, a sample at timestep $t$ of the reverse process of diffusion may not accurately reflect the true marginal distribution at timestep $t$. This discrepancy between forward and reverse process can especially increased when applying online guidance for every denoising steps and consequently lead to the generation of molecules with low stability and validity. We aim to mitigate this issue by correcting the time information based on the sample's current position. 

To achieve this, we train a neural network, a \timepredictor, to estimate the proper timestep of a given noised data.
Specifically, given a random data point  $\mathbf{x}$ with unknown timestep, time predictor models how likely $\mathbf{x}\sim p_t$ for each timestep in $[0,T]$. For training, we parameterize a time predictor by equivariant graph neural network (EGNN) $\boldsymbol{\phi}$. Then, cross-entropy loss between the one hot embedding of timestep vector and the logit vector for the model output is used as follows:
\vspace{-0.5pt}
\begin{align}
\label{eq:timepredictor_loss}
    \mathcal{L}_{\text{tp}}(\boldsymbol{\phi}) = -\mathbb{E}_{t, \mathbf{x}_0}\left[{\log\left({\hat{\mathbf{p}}}_{\boldsymbol{\phi}}(\mathbf{x}_t)_t\right)}\right],
\end{align}
\vspace{-0.5pt}where timestep $t$ is sampled from the uniform distribution $U[0,T]$, $\mathbf{x}_0$ is chosen from the data distribution,  $\mathbf{x}_t$ is constructed from Eq.~\eqref{eq:SDE_forward}, and $\hat{\mathbf{p}}_{\boldsymbol{\phi}}(\mathbf{x}_t)_t$ is the $t$-th component of the model output $\hat{\mathbf{p}}_{\boldsymbol{\phi}}$ for a given input $\mathbf{x}_t$. We empirically evaluate the performance of the time predictor on the QM9 train and test datasets. Forward noise, corresponding to each true timestep, is added to the data, and the time predictor estimates the true timestep, with accuracy measured accordingly. As shown in \ref{fig:subfig_b}, the predictor struggles in the white noise region, but achieves near-perfect accuracy after timestep 400.

\paragraph{Time correction}

\begin{figure}[t]
\centering
\begin{algorithm}[H]
\caption{Time-Aware Conditional Synthesis (\alg)}\label{alg:MACS_main}
    \textbf{Input:} Total number of diffusion timesteps $T$, \guidance strength $z$, target condition $\boldsymbol{c}$, diffusion model $\boldsymbol{\theta}$, \timepredictor $\boldsymbol{\phi}$, time-clip window size $\Delta$.
\begin{algorithmic}[1]
    \STATE $\mathbf{x}_T\sim \mathcal{N}(0,\mathbf{I}_{d})$
    \FOR{$t = T$ to $1$}
        \IF{Online guidance}
            \STATE $\mathbf{g}(\mathbf{x}_t,t)= -\nabla_{\mathbf{x}_t}\mathcal{L}(\mathcal{A}(\mathbf{x}_0),\mathbf{c})$ 
            \COMMENT Online guidance from Eq.~\eqref{eq:online_guidance}
            \STATE $\mathbf{x}'_t \gets \mathbf{x}_t + z\cdot\mathbf{g}(\mathbf{x}_t,t)$  
            \STATE $t_{\text{pred}} \gets \arg\max\left( \boldsymbol{\phi}(\mathbf{x}_t') \right)$
           \STATE $t_{\text{pred}} \gets \texttt{clip}(t_{\text{pred}},\Delta)$
           \STATE  $\hat{\mathbf{x}}_0' \gets \texttt{Tweedie}(\mathbf{x}_t', t_{\text{pred}}) $ 
           \COMMENT{from Eq.~\eqref{eq:modified_Tweedie_formula}}
           \STATE $\mathbf{x}_{t-1} \gets \texttt{forward}(\hat{\mathbf{x}}_0',{t-1})$
           \COMMENT{from Eq.~\eqref{eq:SDE_forward}}
        \ELSE
            \STATE $\mathbf{x}_{t-1} \gets \texttt{reverse}(\mathbf{x}_t,t)$ \COMMENT{one reverse step by diffusion model}
        \ENDIF
    \ENDFOR
\end{algorithmic}
\end{algorithm}
\end{figure}

TCS works by first modifying Tweedie's formula (Eq.~\ref{eq:Tweedie}) using the estimated timestep from \timepredictor to utilize the information of the proper timestep estimated by time predictor during the denoising diffusion process.

Specifically, for a sample $\tilde{\mathbf{x}}_t$ at time $t$ during the reverse diffusion process, we use the corrected timestep $t_{\text{pred}}$ predicted by time predictor, instead of the current timestep $t$, to estimate the final sample $\hat{\mathbf{x}}_0$ as follows:
\begin{align}
\label{eq:modified_Tweedie_formula}
    \texttt{Tweedie}(\tilde{\mathbf{x}}_t,t_{\text{pred}}) := f(\tilde{\mathbf{x}}_t,t_{\text{pred}}) = \frac{\tilde{\mathbf{x}}_t + (1 - \bar{\alpha}_{t_{\text{pred}}})s_{\mathbf{\boldsymbol{\theta}}}(\tilde{\mathbf{x}}_t,t_{\text{pred}})}{\sqrt{\bar{\alpha}_{t_{\text{pred}}}}}.
\end{align}
From the better prediction of the final sample $\hat{\mathbf{x}}_0$, we perturb it back to the next timestep $t\!-\!1$ using the forward process using Eq.~\eqref{eq:SDE_forward}. 
 
Intuitively, TCS encourages the generated samples to adhere more closely to the proper data manifolds at each denoising step. The iterative correction of the time through the generative process ensures the final sample lie on the correct data distribution. Consequently, for 3D molecular generation, TCS helps synthesizing molecules with higher molecular stability and validity, both of which are crucial for generating realistic and useful molecules.

\subsection{Unified guidance with time correction}
\label{subsec:MoTS}

Finally, we present \alg in Algorithm~\ref{alg:MACS_main}, which is a novel diffusion sampler for conditional generation that integrates TCS with \guidance.
For each timestep of the reverse diffusion process (Eq.~\ref{eq:Reverse_SDE_coniditional}), we apply \guidance (Eq.~\ref{eq:online_guidance}) to guide the process toward satisfying the target condition.
Subsequently, TCS is applied to correct the deviation from the proper marginal distribution induced by \guidance. 
This ensures the samples follow the correct marginal distribution during the generative process, as shown in Fig.~\ref{fig:Main_overall_alg}. In Section~\ref{sec:experiments}, we experimentally validate that \alg is capable of generating valid and stable molecules satisfying the target condition.

\paragraph{Time clipping}
During the generation, we empirically observe that the predicted timestep $t_{\text{pred}}$ often deviates significantly from the current time step $t$. Naively applying this information to TCS can be problematic as the
dramatic changes in timestep may skip crucial steps, resulting in unstable or invalid molecules. To prevent large deviation of $t_{\text{pred}}$, we set a time window so that $t_\text{pred}$ remains in the time interval $[t-\Delta, t+\Delta]$, where $\Delta$ is a hyperparameter for the window size.
\section{Experiments}
\label{sec:experiments}
In this section, we present comprehensive experiments to evaluate the performance of \alg and demonstrate its effectiveness in generating 3D molecular structures with specific properties while maintaining stability and validity. In  Section~\ref{sec:toy_experiment}, we present synthetic experiment with $H_3^+$ molecules, where the ground state energies are computed using the variational quantum eigensolver (VQE). In Section~\ref{sec:qm9_experiment}, we assess our method using QM9, a standard dataset in quantum chemistry that includes molecular properties and atom coordinates. We compare our approach against several state-of-the-art baselines and provide a detailed analysis of the results.

% ----------------------Toy setting -----------------------------%

\begin{figure}
\centering
    \subcaptionbox{MAE from target condition \label{fig:subfig_a}}{%
    \includegraphics[width=0.45\linewidth]{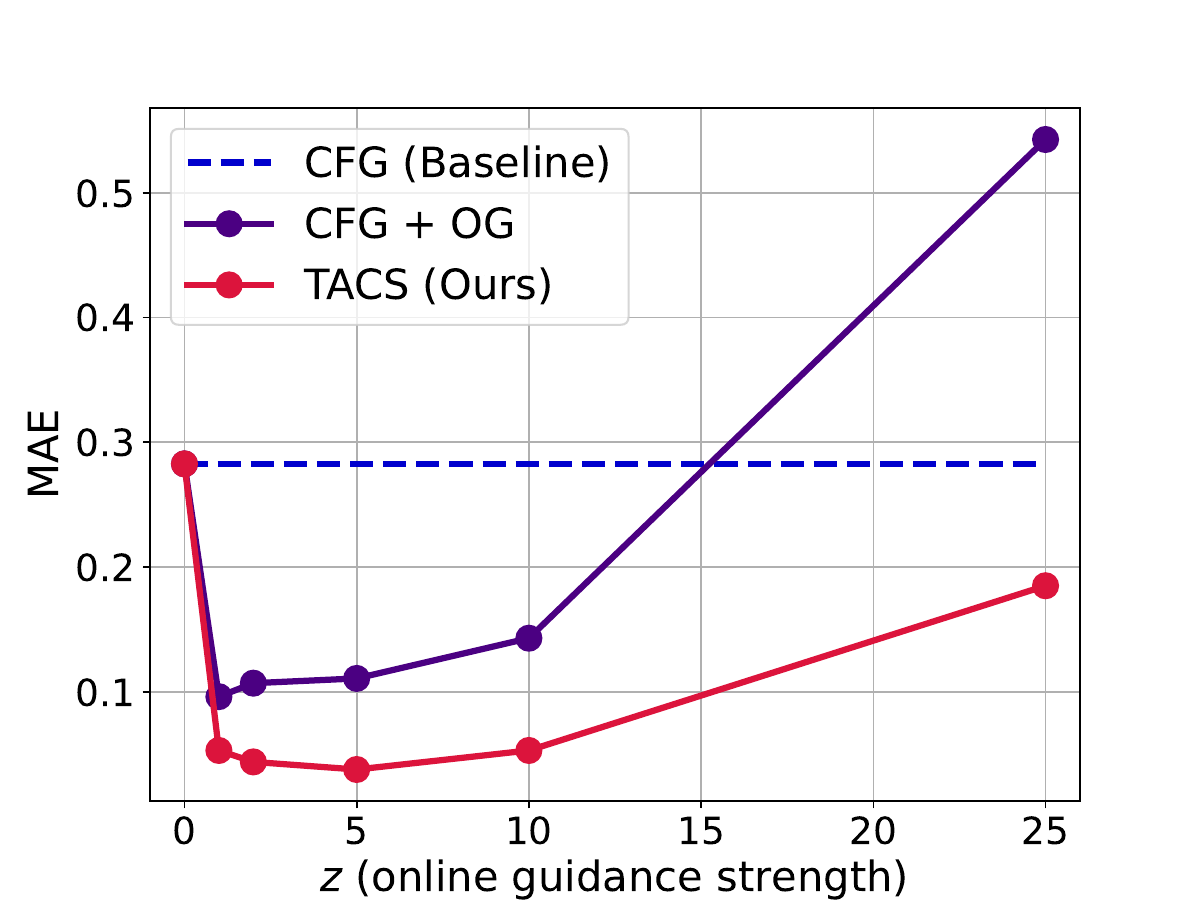}% 
}\hfill
\subcaptionbox{L2 distance from data distribution \label{fig:toy_subfig_b}}{%
  \includegraphics[width=0.45\linewidth]{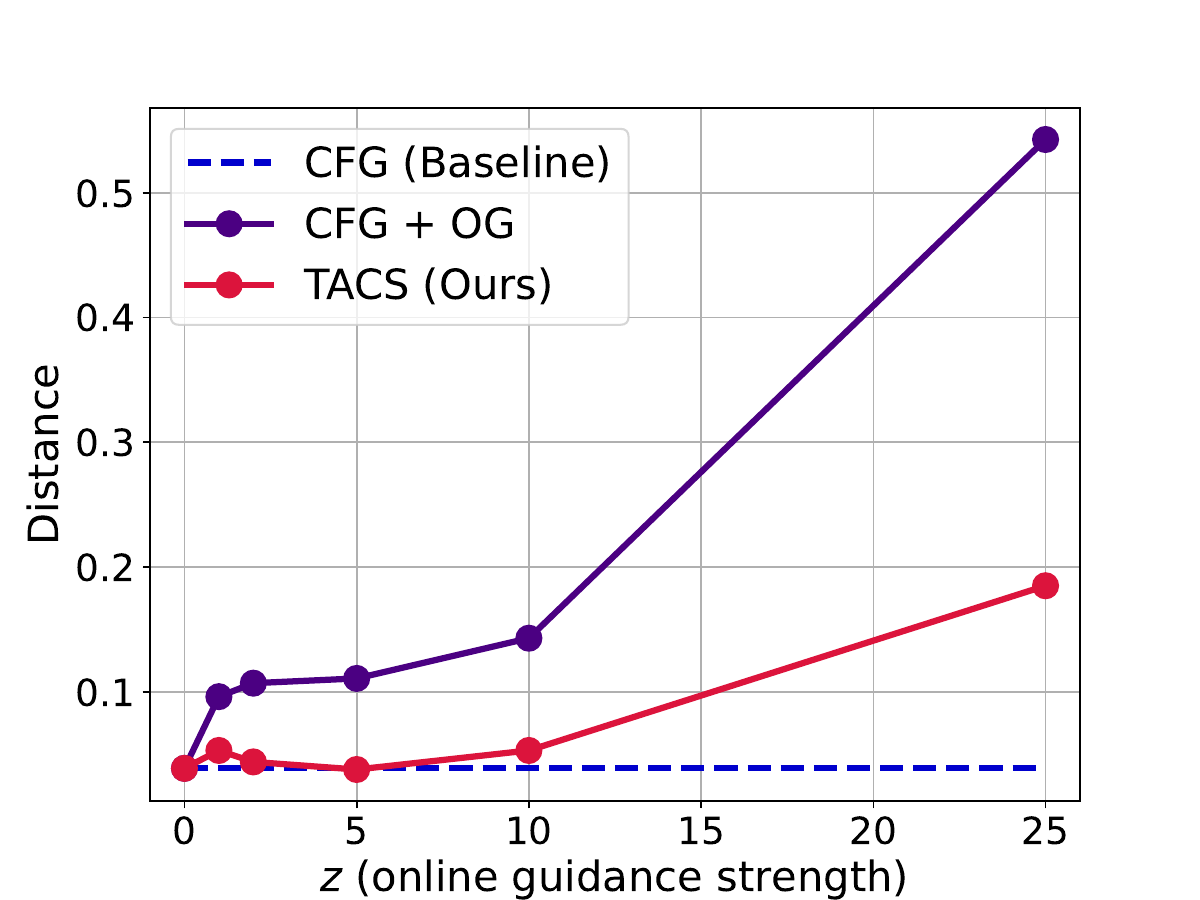}%
}
\caption{Synthetic experiment on $H_3^+$ dataset. TACS 
is robust in generating samples that (a) match the desired condition and (b) stick to the original data distribution.} 
\label{fig:toy_maingraph}
\end{figure}

\subsection{Synthetic experiment with \texorpdfstring{$H_3^+$} {H3+}}
\label{sec:toy_experiment}

\paragraph{Quantum online guidance}
We present quantum online guidance as a modified version of \guidance where quantum machine learning algorithm for the property estimation of generating molecules. Specifically, we use VQE~\citep{tilly2022variational} when calculating the ground state energy of generated molecules. Contrary to prior works~\citep{hoogeboom2022equivariant,bao2022equivariant}, which train auxiliary classifiers to estimate each condition, this quantum computational chemistry-based approach can leverage exact calculation of the condition for a given estimate. This, in turn, is expected to generate molecules with accurate target ground state energies. A detailed explanation of this approach is provided in Appendix~\ref{app:VQE}.

\paragraph{Setup}
We first construct synthetic $H_3^+$ as follows. For each molecule, a hydrogen atom is placed uniformly at random within the 3-D unit sphere.
We then augment our sample by rotating each molecule randomly  to satisfy the equivariance property~\citep{hoogeboom2022equivariant}. Then the ground state energy of each molecule is measured with VQE in order to provide conditional labels. Finally, we train unconditional diffusion model and CFG-based conditional diffusion model with the constructed data. 
For evaluation, we measure the ground state energy and calculate MAE (Mean Absolute Error) with the target energy. Also, we measure the average L2 distance between position of each atom and its projection to the unit sphere.

\paragraph{Results}
The results in Figure~\ref{fig:toy_maingraph} indicate that while \guidance correctly guides the samples to the target condition, it can lead to the samples deviated from the original data distribution. 
In contrast, molecules sampled from \alg can satisfy both low MAE and low L2 distance. Further details and additional discussions are provided in Appendix~\ref{app:Toy_experiment}.

\begin{table}[t]
    \centering
    \caption{Conditional generation with target quantum properties on QM9. TACS generate samples with lowest MAE while maintaining similar level of stability and validity as other baselines. TCS can generate molecules with high stability and validity. $*$ notation is marked for the values for the best MAE within methods with molecule stability above 80\%.}
    \vspace{5pt}
    \label{Table:main_table_after_rebuttal}
    \resizebox{\textwidth}{!}{
    \renewcommand{\arraystretch}{1.0}
    \renewcommand{\tabcolsep}{8pt}
    \begin{tabular}{lccccccccc}
        \toprule
        & \multicolumn{3}{c}{$C_v \left( \frac{\text{cal}}{\text{mol K}} \right)$} & \multicolumn{3}{c}{$\mu$ (D)} & \multicolumn{3}{c}{$\alpha$ (Bohr$^3$)}  \\
    \cmidrule(l{2pt}r{2pt}){2-4}    
    \cmidrule(l{2pt}r{2pt}){5-7}
    \cmidrule(l{2pt}r{2pt}){8-10}
        Method & MAE & MS (\%) & Valid (\%) & MAE & MS (\%) & Valid (\%) & MAE & MS (\%) & Valid (\%) \\
        \midrule
        U-bound & 6.879$\pm$0.015 & - & - & 1.613$\pm$0.003 & - & - & 8.98$\pm$0.020 & - & - \\
        EDM & 1.072$\pm$0.005 & 81.0$\pm$0.6 & 90.4$\pm$0.2 
        & 1.118$\pm$0.006 & 80.8$\pm$0.2 & 91.2$\pm$0.3 
        & 2.77$\pm$0.050 & 79.6$\pm$0.3 & 89.9$\pm$0.2 \\
        EEGSDE & 1.044$\pm$0.023 & 81.0$\pm$0.5 & 90.5$\pm$0.2 
        & 0.854$\pm$0.002 & 79.4$\pm$0.1 & 90.6$\pm$0.5 
        & 2.62$\pm$0.090 &  79.8$\pm$0.3 & 89.2$\pm$0.3 \\
        OG & \textbf{0.184}$\pm$0.004 & 30.2$\pm$0.4 & 51.9$\pm$6.8
        & 26.654$\pm$8.878 & 6.8$\pm$0.5 & 29.7$\pm$0.3 
        & \textbf{0.55}$\pm$0.019 & 21.2$\pm$0.2 & 48.7$\pm$0.3 \\
        TCS(ours) & 0.904$\pm$0.011 & \textbf{90.3}$\pm$0.5  &  \textbf{94.9}$\pm$0.1 
        & 0.971$\pm$0.005 & \textbf{92.7}$\pm$0.3  &  \textbf{96.6}$\pm$0.2 
        & 1.85$\pm$0.006 & \textbf{87.5}$\pm$0.2 & \textbf{93.8}$\pm$0.1  \\
        TACS(ours) & \textbf{0.659}$^*$$\pm$0.008 & 83.6$\pm$0.2 & 92.4$\pm$0.1 
        & \textbf{0.387}$^*$$\pm$0.006 & 83.3$\pm$0.3 & 91.3$\pm$0.3 
        & \textbf{1.44}$^*$$\pm$0.007 & 86.0$\pm$0.1 & 92.5$\pm$0.1\\
        L-bound & 0.040 & - & - & 0.043 & - & - & 0.090 & - & - \\
        \bottomrule
    \end{tabular}}
    %\vspace{0.1in}
    \resizebox{\textwidth}{!}{
    \renewcommand{\arraystretch}{1.0}
    \renewcommand{\tabcolsep}{8pt}
    \begin{tabular}{lccccccccc}
        \toprule
        & \multicolumn{3}{c}{$\Delta \epsilon$ (meV)} & \multicolumn{3}{c}{$\epsilon_{\text{HOMO}}$ (meV)} & \multicolumn{3}{c}{$\epsilon_{\text{LUMO}}$ (meV)}  \\
    \cmidrule(l{2pt}r{2pt}){2-4}    
    \cmidrule(l{2pt}r{2pt}){5-7}
    \cmidrule(l{2pt}r{2pt}){8-10}
        Method  & MAE & MS (\%) & Valid (\%) & MAE & MS (\%) & Valid (\%) & MAE & MS (\%) & Valid (\%) \\
        \midrule
        U-bound & 1464$\pm$4 & - & - & 645$\pm$41 & - & - & 1457$\pm$5 & - & - \\
        EDM & 673$\pm$7 & 81.8$\pm$0.5 & 90.9$\pm$0.3 
        & 372$\pm$1 & 79.6$\pm$0.1 & 91.6$\pm$0.2 
        & 602$\pm$4 & 81.0$\pm$0.2 & 91.4$\pm$0.5 \\
        EEGSDE & 539$\pm$5 & 80.1$\pm$0.4 & 90.5$\pm$0.3 
        & 300$\pm$5 & 78.7$\pm$0.7 & 91.2$\pm$0.2 
        & 494$\pm$9 & 81.4$\pm$0.6 & 91.1$\pm$0.2 \\
        OG & \textbf{95.2}$\pm$4 & 31.3$\pm$0.7 & 61.9$\pm$3.4  
        & \textbf{233}$\pm$8 & 11.5$\pm$0.4 & 40.8$\pm$2.6 
        & \textbf{170}$\pm$1 & 21.1$\pm$0.5 & 42.3$\pm$0.9 \\
        TCS(ours) & 594$\pm$4 & \textbf{91.9}$\pm$0.4  & \textbf{96.0}$\pm$0.2  
        & 338$\pm$5 & \textbf{92.7}$\pm$0.2 & \textbf{96.6}$\pm$0.2 
        & 493$\pm$9 & \textbf{91.7}$\pm$3.9  & \textbf{96.2}$\pm$0.4  \\
        TACS(ours) & \textbf{332}$^*$$\pm$3 & 88.8$\pm$0.6 & 93.9$\pm$0.3  
        & \textbf{168}$^*$$\pm$2 & 87.3$\pm$0.7 & 93.0$\pm$0.2 
        & \textbf{289}$^*$$\pm$4 & 82.7$\pm$0.7 & 91.3$\pm$0.2 \\
        L-bound & 65 & - & - & 39 & - & - & 36 & - & - \\
        \bottomrule
    \end{tabular}}
\end{table}

\subsection{Conditional generation for target quantum chemical properties}
\label{sec:qm9_experiment}

\paragraph{Dataset}
We evaluate our method on QM9 dataset \citep{ramakrishnan2014quantum}, which contains about 134k molecules with up to 9 heavy atoms of (C, N, O, F), each labeled with 12 quantum chemical properties. Following previous works \citep{anderson2019cormorant,hoogeboom2022equivariant}, we test on 6 types of quantum chemical properties and split the dataset into 100k/18k/13k molecules for training, validation, and test. The training set is further divided into two disjoint subsets of 50k molecules each: $D_a$ for property predictor training and $D_b$ for generative model training. Further details are provided in Appendix~\ref{app:qm9_details}.

\paragraph{Evaluation}
To evaluate how generate samples meet the desired target condition, a property prediction model $\phi_p$~\citep{satorras2021n} is trained on $D_a$. Then, MAE for $K$ number of samples is calculated as $\frac{1}{K} \sum_{i=1}^{K} |\phi_p(\mathbf{x}^{(i)})-c^{(i)}|$, where $\mathbf{x}^{(i)}$ represents $i$-th generated molecule and $c^{(i)}$ is corresponding target quantum chemical properties. Molecular stability (MS) and validity (Valid)~\citep{hoogeboom2022equivariant} are used to measure how generated samples satisfy basic chemical properties. Details of the evaluation metrics are provided in Appendix~\ref{app:qm9_details}.

\paragraph{Baselines}
We use Equivariant Diffusion Models (EDM)~\citep{hoogeboom2022equivariant} and 
Equivariant Energy Guided SDE (EEGSDE)~\citep{bao2022equivariant} for the baselines. Following~\citep{hoogeboom2022equivariant}, 
we put additional baselines including "Naive Upper-Bound" (randomly shuffled property labels), "\#Atoms" (properties predicted by atom count), and "L-Bound" (lower bound on MAE using a separate prediction model).  

\paragraph{Results}
Table~\ref{Table:main_table_after_rebuttal}
shows the result of conditional generation of TACS and TCS with baselines. We generate $K$=$10^4$ samples for the evaluation in each experiment and the average values and standard deviations are reported across 5 runs. For all of the quantum chemical properties, \alg achieves lower MAE while maintaining comparable or higher molecular stability (MS) and validity compared to other baselines. Notably, when comparing with the baseline methods that maintains the MS above 80\%, the MAE of \alg is significantly lower than the baselines. The result demonstrates the effectiveness of \alg in balancing the objectives of generating molecules with desired properties and ensuring their structural validity.

TCS achieves high validity and atom stability and molecular stability, surpassing the unconditional generation performance of the baselines, but with a higher MAE compared to \alg. This highlights the importance of \guidance for precise property targeting. However, applying only \guidance yields samples with low MAE but suffers from reduced validity and stability, occasionally failing to generate valid molecules due to numerical instability. This shows the ability of TACS which places the samples on the correct data manifold at each denoising step, even if they deviate from the true time-manifold due to the \guidance. Finally, we put the results of different methods with 9 different runs for each method and plot the MAE and MS in Figure~\ref{fig:pareto}. The result clearly shows that TACS and TCS reaches closer to the Pareto front of satisfying molecular stability and the target condition together.

\begin{figure}[t]
\centering
\subcaptionbox{Pareto front \label{fig:pareto}}{%
    \includegraphics[width=0.48\linewidth]{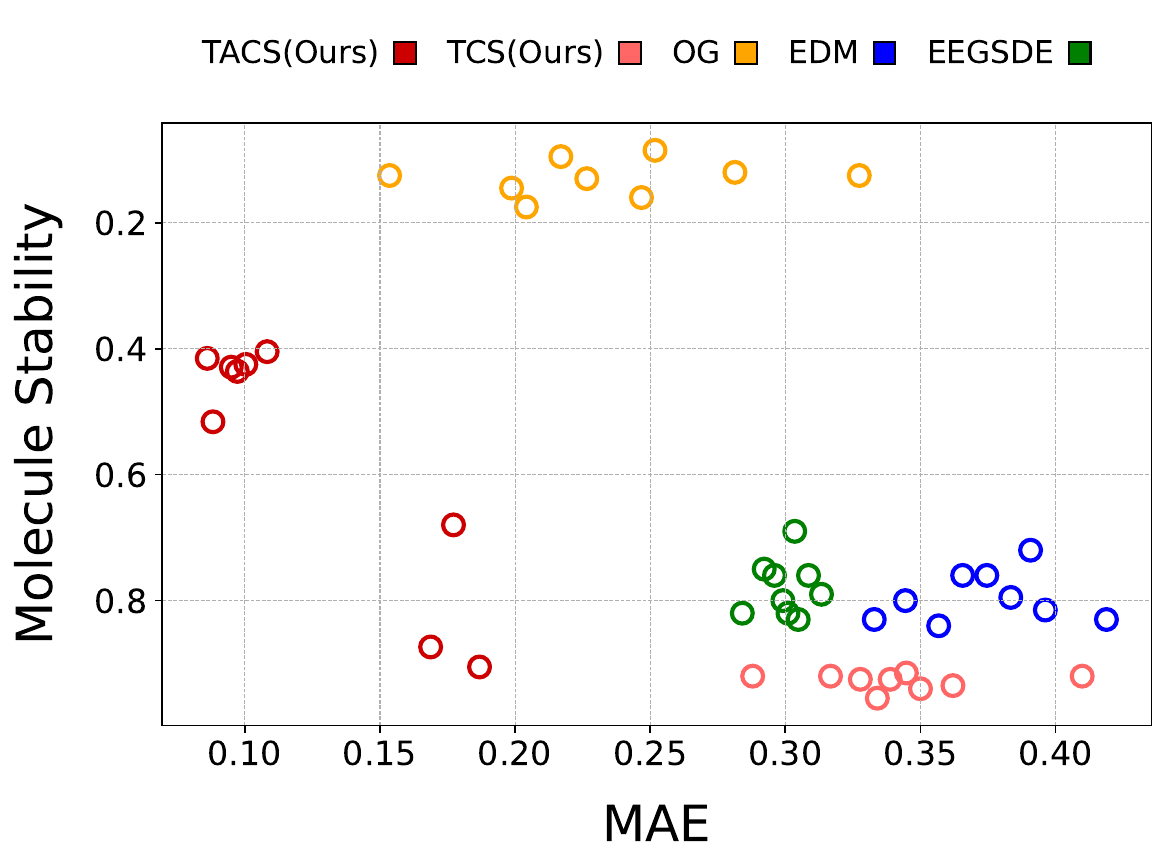} 
}\hfill
\subcaptionbox{Time predictor performance \label{fig:subfig_b}}{%
  \includegraphics[width=0.475\linewidth]{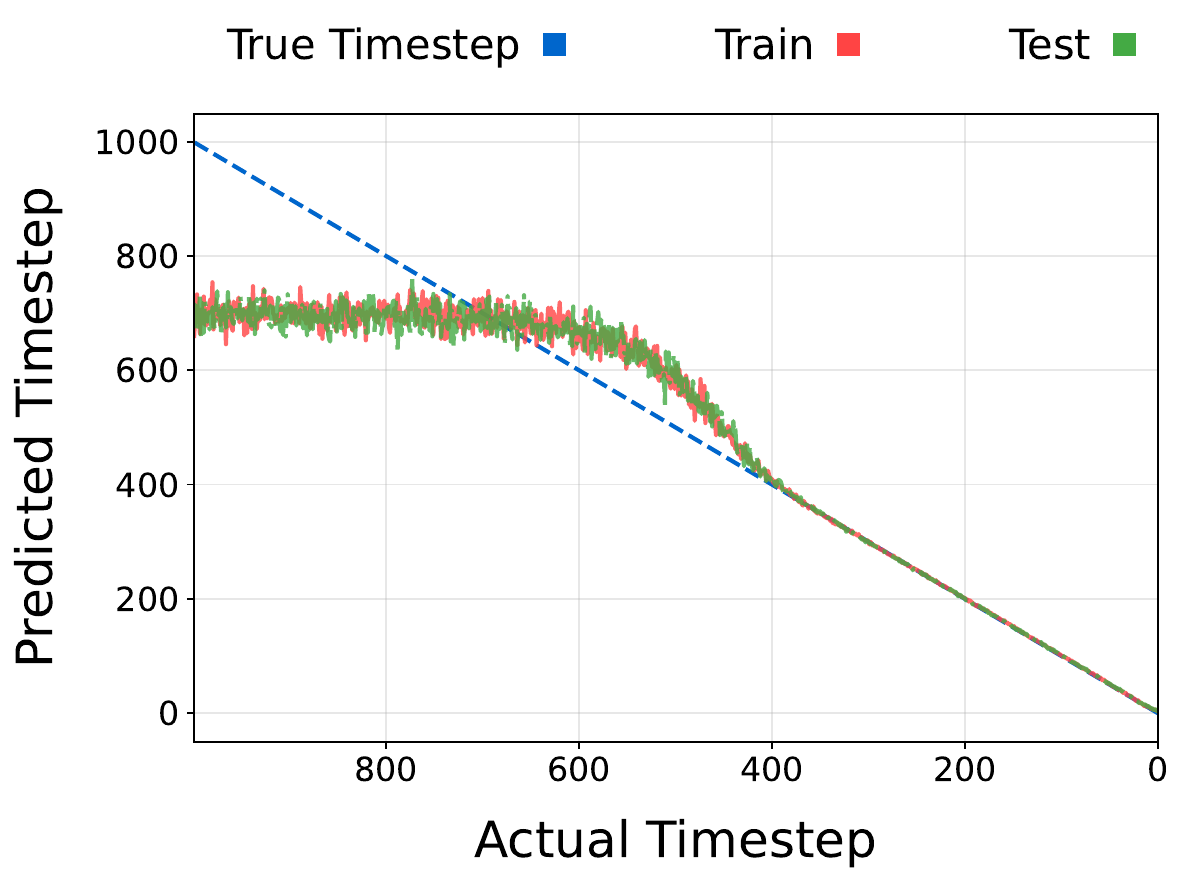}
}
\caption{(a) Comparative analysis of molecule stability and MAE across five distinct methods, highlighting the enhanced stability of our approach at comparable MAE levels. (b) Performance of the time predictor for train and test set on QM9.}
\label{fig:overall_fig}
\end{figure}

\begin{figure}[t]
\begin{minipage}{0.5\textwidth}
        \centering
        \captionsetup{margin=0.2cm}
        \captionof{table}{Quantitative results showing similarity and stability of generated molecules compared to target structures.}
        \label{table:target_structure}
        \resizebox{\textwidth}{!}{
            \renewcommand{\arraystretch}{1.0}
            \renewcommand{\tabcolsep}{6pt}
            \begin{tabular}{l|cc}
                \toprule
                & \multicolumn{2}{c}{\textbf{QM9}} \\
                \cmidrule(lr){2-3}
                Method & Similarity $\uparrow$ & MS (\%) \\
                \midrule
                cG-SchNet & 0.499$\pm$0.002 & - \\
                Conditional EDM & 0.671$\pm$0.004 & - \\
                TCS (Ours) & \textbf{0.792$\pm$0.077} & 90.42 \\
                TACS (Ours) ($z=0.01$) & 0.694$\pm$0.001 & \textbf{90.45} \\
                TACS (Ours) ($z=0.05$) & 0.695$\pm$0.003 & 90.02 \\
                TACS (Ours) ($z=0.1$) & 0.713$\pm$0.087 & 90.28 \\
                EEGSDE ($s=0.1$) & 0.547$\pm$0.002 & 74.07 \\
                EEGSDE ($s=0.5$) & 0.600$\pm$0.002 & 74.67 \\
                EEGSDE ($s=1.0$) & 0.540$\pm$0.029 & 90.44 \\
                \bottomrule
            \end{tabular}
        }
\end{minipage}
\hspace{20pt}
\begin{minipage}{0.4\textwidth}
        \centering
        \captionsetup{margin=0.1cm}
        \captionof{table}{Unconditional generation using Geom-Drug dataset.}
        \label{table:Geom-Drug_tcs}
        \resizebox{\textwidth}{!}{
            \renewcommand{\arraystretch}{1.0}
            \renewcommand{\tabcolsep}{6pt}
            \begin{tabular}{l|cc}
                \toprule
                & \multicolumn{2}{c}{\textbf{Geom-Drug}} \\
                \cmidrule(lr){2-3}
                Method & AS (\%) $\uparrow$ & Valid (\%) \\
                \midrule
                Data & 86.5 & 99.9 \\
                \cmidrule[0.05pt](lr){1-3}
                ENF & - & - \\
                G-Schnet & - & - \\
                GDM & 75.0 & 90.8 \\
                GDM-AUG & 77.7 & 91.8 \\
                EDM & 81.3 & 92.6 \\
                EDM-Bridge & 82.4 & 92.8 \\
                \cmidrule[0.05pt](lr){1-3}
                TCS(Ours) & \textbf{89.6} & \textbf{97.6} \\
                \bottomrule
            \end{tabular}
        }
\end{minipage}
\end{figure}

\subsection{Target structure generation}

We conduct experiments on target structure generation with QM9 as in~\citep{bao2022equivariant}. For evaluation, we report Tanimoto similarity score~\citep{gebauer2022inverse} which captures similarity of molecular structures by molecular fingerprint and molecular stability to check whether basic properties of molecules are satisfied during the conditional generation process. We put additional details of the experiment in Appendix~\ref{app: target-structure}.
\vspace{-10pt}

\paragraph{Results} Table~\ref{table:target_structure} shows that TACS significantly outperforms baseline methods both in Tanimoto similarity and molecular stability. Interestingly, performance of TACS is robust in the online guidance strength $z$. This demonstrates TACS's ability to generalize on different tasks.

\subsection{Unconditional generation on Geom-Drug dataset}

To verify the scalability of time correction sampler, we use Geom-Drug~\citep{axelrod2022geom} as our dataset. Geom-Drug consists of much larger and complicate molecules compared to the QM9. For fair comparisons, we follow~\citep{hoogeboom2022equivariant,bao2022equivariant} to split the dataset for training, validation and test set include 554k, 70k, and 70k samples respectively. We test the performance of TCS on unconditional generation of 3D molecules when using Geom-Drug. For evaluation, we use atom stability (AS) and validity of generated samples. The result in Table~\ref{table:Geom-Drug_tcs} shows that samples generated by TCS satisfy the highest atom stability and the validity with high margin compared to the baselines. Details of the experiments are in Appendix~\ref{app:Geom-Drug}.

\subsection{Ablation Studies}

\begin{table}[t]
\vspace{-15pt}
\centering
\captionsetup{margin=0.1cm}
\caption{Comparison between performance of TACS when using argmax function and expectation for correcting timesteps using time predictor.}
\vspace{5pt}
\renewcommand{\arraystretch}{1.0}
\renewcommand{\tabcolsep}{8pt}
\label{tab:ablation_timepredict_function}
\resizebox{1.0\textwidth}{!}{
\normalsize{
\begin{tabular}{lcc|cc|cc|cc|cc|cc} 
\toprule
& \multicolumn{2}{c}{$C_v \left( \frac{\text{cal}}{\text{mol K}} \right)$} 
& \multicolumn{2}{c}{$\mu$ (D)} 
& \multicolumn{2}{c}{$\alpha$ (Bohr$^3$)} 
& \multicolumn{2}{c}{$\Delta \epsilon$ (meV)}
& \multicolumn{2}{c}{$\epsilon_{\text{HOMO}}$ (meV)}
& \multicolumn{2}{c}{$\epsilon_{\text{LUMO}}$ (meV)}  \\ 
\cmidrule(l{2pt}r{2pt}){2-3}
\cmidrule(l{2pt}r{2pt}){4-5}
\cmidrule(l{2pt}r{2pt}){6-7}
\cmidrule(l{2pt}r{2pt}){8-9}
\cmidrule(l{2pt}r{2pt}){10-11}
\cmidrule(l{2pt}r{2pt}){12-13}
Method
& MAE & MS (\%)
& MAE & MS (\%)
& MAE & MS (\%)
& MAE & MS (\%)
& MAE & MS (\%)
& MAE & MS (\%) \\ 
\midrule
Argmax & 0.659 & 83.3 & 0.387 & 83.3 & 1.44 & 86.0 & 332 & 88.8 & 168 & 87.3 & 289 & 82.7
\\
Expectation & 0.703 & 84.6 & 0.451 & 90.2 & 1.56 & 87.3 & 351 & 90.7 & 182 & 89.8 & 334 & 90.1 \\
\bottomrule
\end{tabular}
}
}
\vspace{-10pt}
\end{table}

\paragraph{Time prediction function}
We investigate how the the design of time prediction function affects the performance of TACS. Specifically, for line 6 in Alg.~\ref{alg:MACS_main}, rather than using argmax function to obtain corrected timestep $t_{\text{pred}}$, we choose to use expectation value by $t_{\text{pred}} = \mathbb{E}[\phi(x)]$. Table~\ref{tab:ablation_timepredict_function} shows the comparison between two methods in six different types of quantum chemical properties. Expectation based time prediction results in molecules with higher MAE and higher molecular stability. 

\begin{wraptable}{r}{0.45\textwidth}
    \vspace{-15pt}
    \centering
    \caption{Ablation study on the OG strength $z$ with target property $\epsilon_{\text{LUMO}}$.}
    \label{tab:ablation_online_guidance_strength}
    \resizebox{0.45\textwidth}{!}{
    \begin{tabular}{lccc}
    \toprule
    Method & MAE & MS (\%) & Valid (\%) \\
    \midrule
    TCS ($z=0$) & 493 & \textbf{91.7} & \textbf{96.2} \\
    OG & \textbf{170} & 21.1 & 42.3 \\
    $z = 1.5$ & 311 & 80.3 & 90.7 \\
    $z = 1.0$ & 236 & 74.9 & 86.3 \\
    $z = 0.5$ & 288 & 82.7 & 91.3 \\
    \bottomrule
    \end{tabular}
    }
    \vspace{-10pt}
\end{wraptable}

\paragraph{Online guidance strength $z$}
To analyze the effect of online guidance strength $z$ in Eq.~\eqref{eq:OG_reverse_SDE}, we measure MAE, molecular stability, and validity of samples generated by TACS for different $z$ values. Table~\ref{tab:ablation_online_guidance_strength} shows the result with target condition on $\epsilon_{\text{LUMO}}$ values. One can observe while trade-off occurs for different $z$ values, performance of TACS is robust in varying $z$. Interestingly, our experiment shows that there exists an optimal value of $z$ which generates samples with the lowest MAE that is even comparable to applying online guidance without time correction (OG). As expected, using $z=0$ (TCS) generates molecules with the highest MS and validity but with the highest MAE.

\begin{wraptable}{r}{0.4\textwidth}
\vspace{-15pt}
\centering
\captionsetup{margin=0.1cm}
\caption{Results of TACS on target property $\alpha$ when varying the time window size.}
\label{tab:time_window}
    \resizebox{0.4\textwidth}{!}{
    \small{
    \begin{tabular}{ccc}
    \toprule
    Window size ($\Delta$) 
    & MAE
    & MS (\%)  \\
    \midrule
    2 & 1.481 & \textbf{87.25} \\
    4 & 1.460 & 86.52 \\
    6 & 1.460 & 85.73 \\
    8 & 1.448 & 86.84 \\
    10 & \textbf{1.441} & 86.08 \\
    12 & 1.442 & 85.17 \\
    14 & 1.459 & 82.76 \\
    16 & 1.530 & 79.10 \\
    \bottomrule
    \end{tabular}
}
}
\vspace{-10pt}
\end{wraptable}

\paragraph{Time window length $\Delta$}
We measure how the performance of TACS varies with time winodw length $\Delta$. Table~\ref{tab:time_window} shows the MAE, molecular stability values for different window sizes when the target property is $\alpha$. The result shows that the performance of TACS is robust when using moderate window size but decreases when window size becomes larger than certain point. We set $\Delta=10$ for other experiments.

In Appendix~\ref{app:additional experiments}, we provide the results of further ablation studies including the effect of mc sampling and effective diffusion steps for TACS. Overall, the results show that our method is robust in the choice of hyperparameters and generalizable to different datasets and tasks.
\section{Discussion}
\label{sec:discussion}

\paragraph{Exploiting quantum chemistry}
In Section~\ref{sec:toy_experiment}, we demonstrate that quantum computing-based guidance can serve as an accurate property predictor for the online guidance. Currently, in the absence of a non-noisy quantum computer, scaling up this exact guidance to the QM9 dataset is close to impossible due to compounding noise~\citep{preskill2018quantum, clary2023vqe}. However, future fault-tolerant quantum technology is expected to provide quantum advantage in calculating chemical properties. This can be incorporated into our algorithm when using \guidance and therefore, further improvements of our \alg are on the horizon.

\paragraph{Connection to other fields}
Recent works point out the exposure bias exists for diffusion models~\citep{ning2023elucidating}, where there is a mismatch between forward and reverse process.
Our experiments in~\ref{sec:qm9_experiment} indicate that Time Correction Sampler can provide a solution to the exposure bias problem during the sampling process in diffusion models. Moreover, since \timepredictor can gauge this mismatch during inference, one might leverage this information for future works.

Another direction is applying our algorithm to matching~\citep{wu2022diffusion,jo2023graph,midgley2024se} frameworks. Contrary to diffusion models, these matching models can start with arbitrary distributions and directly learn vector fields. We expect \timepredictor to be also effective with these types of algorithms. Investigating the connection between our algorithm and matching models will be an interesting future direction.
\section{Conclusion}
\label{sec:conclusion}

In this work, we introduce \fname (\alg), a novel approach for conditional 3D molecular generation using diffusion models. Our algorithm leverages a Time Correction Sampler (TCS) in combination with online guidance to ensure that generated samples remain on the correct data manifold during the reverse diffusion process. Our experimental results clearly demonstrate the advantage of our algorithm, as it can generate molecules that are close to the target conditions while also being stable and valid. This can be seen as a significant step towards precise and reliable molecular generation.

Despite multiple advantages, several open questions remain. For example, how can we more efficiently use the Time Correction Sampler, or more generally, whether
this method improves performance in other domains such as in image generation. We expect that our work will open various opportunities across different domains, such as quantum chemistry and diffusion models.

\paragraph{Limitation}
Although we demonstrate the effectiveness of our algorithm on multiple datasets and tasks, we use a trained neural network to estimate chemical properties of each molecule for main experiments. Using exact computational chemistry-based methods might improve our algorithm.
\paragraph{Societal impacts} 
We believe that our framework can assist in drug discovery, which requires synthesizing stable and valid molecules that satisfy target conditions. However, our work could unfortunately be misused to generate toxic or harmful substances.

\section*{Acknowledgments}
\label{sec:acknowledgments}
This work was supported by Institute for Information \&
communications Technology Planning \& Evaluation (IITP) grant funded by the Korea government (MSIT) 
(No. RS-2019-II190075,  Artificial Intelligence Graduate School Program (KAIST); No. RS-2024-00457882, AI Research Hub Project),
the National Research Foundation of Korea NRF grant
funded by the Korean government (MSIT) (No.
RS-2019-NR040050 Stochastic Analysis and
Application Research Center (SAARC)), and LG Electronics.

\clearpage

\clearpage
\appendix{
\section{Method details}
\label{app:method_details}

\subsection{Time Predictor}
\label{app:time_predictor}
For model architecture, we use EGNN~\citep{satorras2021n} with $L=7$  layers, each with hidden dimension $h_f = 192$. We use the same split of training / test dataset of QM9.

\paragraph{Time predictor training}

For completeness, we restate the training of the time predictor (Eq.~\ref{eq:timepredictor_loss}) here. The time predictor is trained for minimizing the cross-entropy loss between the predicted logits $\hat{\mathbf{p}}$ and one-hot vector of the forward timestep $t$:
\begin{equation}
        \mathcal{L}_{\text{time-predictor}}(\boldsymbol{\phi}) = 
        -\mathbb{E}_{t, \mathbf{x}_0}\left[{\log\left({\hat{\mathbf{p}}}_{\boldsymbol{\phi}}(\mathbf{x}_t)_t\right)}\right],
    \end{equation}
where $t$ is uniformly drawn from the interval $[0,T]$.

For conditional diffusion model, we train separate time predictor for each of the target property by concatenating conditional information $c$ to the input $\mathbf{x}_t$.

The rationale behind using cross-entropy loss rather than simple regression loss is because $p(t|\mathbf{x})$ can not be estimated from the point estimate if there exists intersection between support of marginal distributions $p_t$ and $p_s$ which is from different timesteps $s$ and $t$, respectively. In other words, this implies there exists $\mathbf{x}\in \mathcal{R}^d$ such that $p_t(\mathbf{x})\!>\!0$ and $p_s(\mathbf{x})\!>\!0$ simultaneously holds.

Finally, we train the time predictor within 24 hours with 4 NVIDIA A6000 GPUs. 

\subsection{Quantum online guidance}

\paragraph{Quantum Machine Learning}
\label{app:QML}
Quantum computing is expected to become a powerful computational tool in the future ~\citep{feynman2018simulating}. While at its early stage, various of quantum machine learning algorithms are proved to have advantage over classical methods~\citep{arute2019quantum}. In computational chemistry, these advantages are indeed expected to have huge potential since classically intractable computations like finding ground state for big molecules are expected to become feasible with exponential speed-ups~\citep{shor1994algorithms} of quantum machines.

\paragraph{Variational Quantum Eigensolver}
\label{app:VQE}
% Introducing VQE 
Variational Quantum Eigensolver (VQE)~\citep{peruzzo2014variational} is a near term quantum machine learning algorithm which leverages variational principle to obtain the lowest energy of a molecule with given Hamiltonian.
For the given Hamiltonian $\hat{H}$, trial wave function ${|\psi(\boldsymbol{\theta})\rangle}$ which is parameterized by a quantum circuit $(\boldsymbol{\theta})$ is prepared to obtain ground state energy $E_0$ with following inequality:
\begin{equation}
E_0 \leq \frac{\langle \psi(\boldsymbol{\theta}) | \hat{H} | \psi(\boldsymbol{\theta}) \rangle}{\langle \psi(\boldsymbol{\theta}) | \psi(\boldsymbol{\theta}) \rangle}.
\end{equation}
Specifically, parameters $\boldsymbol{\theta}$ in quantum circuit is iteratively optimized to minimize the following objective:
\begin{equation}
\label{eq:VQE_objective_function}
E(\boldsymbol{\theta}) = \frac{\langle \psi(\boldsymbol{\theta}) | \hat{H} | \psi(\boldsymbol{\theta}) \rangle}{\langle \psi(\boldsymbol{\theta}) | \psi(\boldsymbol{\theta}) \rangle}.    
\end{equation}
When quantum circuit $\boldsymbol{\theta}$ is expressive enough, one can see that $|\psi(\boldsymbol{\theta}_{\star})\rangle$, where $\boldsymbol{\theta}_{\star}$ is a minimizer of Eq.~\eqref{eq:VQE_objective_function}, gives the ground state of the given system. For more comprehensive review with its potential advantages, one may refer to~\citep{tilly2022variational}.

\paragraph{Quantum online guidance}
Quantum online guidance is a type of online guidance algorithm where we use VQE-based algorithm to calculate exact values of quantum chemical properties instead of using classifier which is usually a neural network.

In each denoising step of diffusion model, we first apply Tweedie’s formula (Eq.~\ref{eq:Tweedie}) to estimate clean molecule $\hat{\mathbf{x}}_0$. Then we use VQE to calculate ground state energy of the estimated molecule by iteratively updating 
$\boldsymbol{\theta}$ for $E(\hat{\mathbf{x}}_0,\boldsymbol{\theta})=\langle\psi(\boldsymbol{\theta})|H(\hat{\mathbf{x}}_0)|\psi(\boldsymbol{\theta})\rangle$. After obtaining $\boldsymbol{\theta}_{\star}(\hat{\mathbf{x}}_0)$ which minimizes $E(\hat{\mathbf{x}}_0,\boldsymbol{\theta})$, we obtain target property value $E_0(\hat{\mathbf{x}}_0)$. To obtain gradient in Eq.~\eqref{eq:online_guidance}, we use zeroth-order method~\citep{liu2020primer} with respect to the position of atoms to obtain gradient as follows:
\begin{equation}
  \nabla_{\mathbf{x}_t} \log{ \mathbb{E}_{\mathbf{x}_0 \sim p\left(\mathbf{x}_0|\mathbf{x}_t\right)}  p(\mathbf{c}|\hat{\mathbf{x}}_0)}
  \approx -\nabla_{\mathbf{x}_t}E_0(\hat{\mathbf{x}}_0^i)
  \approx 
  - \sum_{i=1}^k\frac{E_0(\hat{\mathbf{x}}_0^i+\boldsymbol{h}_i)-E_0(\hat{\mathbf{x}}_0^i-\boldsymbol{h}_i)}{2\boldsymbol{h}_i},
\end{equation}
where $k$ is a hyperparameter for multipoint estimate when using the zeroth-order optimization.

\subsection{Properties of the Zero-Center-of-Mass Subspace}
\label{app:com_subspace}
Let $\mathcal{X} = \{\mathbf{x} \in \mathbb{R}^{M \times 3} : \frac{1}{M} \sum_{i=1}^M \mathbf{x}_i = \mathbf{0}\}$ be the subspace of $\mathbb{R}^{M \times 3}$ where the center of mass is zero. Here we discuss some properties of $\mathcal{X}$ that are used in the \alg framework.
First, note that $\mathcal{X}$ is a linear subspace of $\mathbb{R}^{M \times 3}$ with dimension $(M-1) \times 3$. There exists an isometric isomorphism $\phi: \mathbb{R}^{(M-1) \times 3} \to \mathcal{X}$, i.e., a linear bijective map that preserves distances: $\|\phi(\hat{\mathbf{x}})\| = \|\hat{\mathbf{x}}\|$ for all $\hat{\mathbf{x}} \in \mathbb{R}^{(M-1) \times 3}$. Intuitively, $\phi$ allows us to map between the lower-dimensional space $\mathbb{R}^{(M-1) \times 3}$ and the constrained subspace $\mathcal{X}$ without distortion.
If $\mathbf{x} \in \mathcal{X}$ is a random variable with probability density $q(\mathbf{x})$, then the corresponding density of $\hat{\mathbf{x}} = \phi^{-1}(\mathbf{x})$ in $\mathbb{R}^{(M-1) \times 3}$ is given by $\hat{q}(\hat{\mathbf{x}}) = q(\phi(\hat{\mathbf{x}}))$. Similarly, a conditional density $q(\mathbf{x} | \mathbf{y})$ on $\mathcal{X}$ can be written as $\hat{q}(\hat{\mathbf{x}} | \hat{\mathbf{y}}) = q(\phi(\hat{\mathbf{x}}) | \phi(\hat{\mathbf{y}}))$.
In practice, computations involving probability densities on $\mathcal{X}$ can be performed in $\mathbb{R}^{(M-1) \times 3}$ and mapped back to $\mathcal{X}$ using $\phi$ as needed. This allows \alg to efficiently learn and sample from distributions on molecular geometries while preserving translation invariance.
For further mathematical details on subspaces defined by center-of-mass constraints, we refer the reader to \citep{kohler2020equivariant} and \citep{satorras2021n}.

\subsection{Classfier-free guidance}
For conditional generation, we need conditional score $\nabla_{\mathbf{x}}\log{p_{t}(\mathbf{x}_t|\mathbf{c})}$ where $\mathbf{c}$ is our target condition.
Classifier-free guidance (CFG)~\citep{ho2022classifier} replaces conditional score with combination of unconditional score and conditional score. Here, diffusion model is trained with combination of unlabeled sample $(\mathbf{x}_0,\emptyset)$ and labeled samples $(\mathbf{x}_0,\mathbf{c})$. 
The reverse diffusion process (Eq.~\ref{eq:SDE_reverse}) in CFG  changes as follows:

\begin{equation}
\label{eq:CFG_reverse_SDE}
 \mathrm{d}\mathbf{x}_t = \left[-\frac{1}{2} \beta(t) \mathbf{x}_t - \beta(t) \left(-w \nabla_{\mathbf{x}_t} \log{p_{t}(\mathbf{x}_t)} + (1 + w) \nabla_{\mathbf{x}_t} \log{p_{t}(\mathbf{x}_t|\mathbf{c})} \right) \right] \mathrm{d}t + \sqrt{\beta(t)} \, \mathrm{d} \bar{\mathbf{w}}_t
\end{equation}

Here, $w$ is a conditional weight which controls the strength of conditional guidance. When $w=-1$, the process converges to the unconditional generation and with larger $w$, one can put more weights on conditional score. In the experiments, we use $w=0$ following~\citep{hoogeboom2022equivariant,bao2022equivariant}.

\section{Experimental details}
\label{app:experimental details}

\subsection{Synthetic experiment with \texorpdfstring{$H_3^+$} {H3+}}
\label{app:Toy_experiment}
Here, we provide experimental results on our synthetic experiment. Our results show that TACS is robust through different online guidance strength z and outperforms naive mixture of CFG and OG. 
\begin{table}[ht]
    \centering
    \small 
    \begin{minipage}{0.45\textwidth}
        \centering
        \caption{MAE with target condition}
        \vspace{5pt}
        \begin{tabular}{ccc}
            \toprule
            \textbf{Z} & \textbf{CFG+OG} & \textbf{TACS} \\
            \midrule
            0   & 0.4823 & 0.0817 \\
            1   & 0.0961 & 0.0530 \\
            2   & 0.1069 & 0.0438 \\
            5   & 0.1107 & 0.0377 \\
            10  & 0.1430 & 0.0529 \\
            25  & 0.5427 & 0.3089 \\
            \bottomrule
        \end{tabular}
        \label{tab:baseline_zcontrol1_performance}
    \end{minipage}%
    \begin{minipage}{0.45\textwidth}
        \centering
        \caption{L2 distance from data distribution}
        \vspace{5pt}
        \begin{tabular}{ccc}
            \toprule
            \textbf{Z} & \textbf{CFG+OG} & \textbf{TACS} \\
            \midrule
            0   & 0.0386 & 0.0294 \\
            1   & 0.0653 & 0.0305 \\
            2   & 0.1886 & 0.0339 \\
            5   & 0.6553 & 0.0383 \\
            10  & 3.5112 & 0.1820 \\
            25  & 18.8983 & 8.2466 \\
            \bottomrule
        \end{tabular}
        \label{tab:baseline_zcontrol2_performance}
    \end{minipage}
\end{table}

\paragraph{Geometric Optimization}
It is known that $H_3^+$ molecule has a ground state energy around $-1.34\text{Ha}$. Assuming we don't have any prior knowledge of this information, we try to generate molecules with conditioning on the target ground state energy $-2.0\text{Ha}$. Applying TACS proves to be strong in this case also, which implies that our method can robustly guide the molecule without destroying distances.

\subsection{Experiments on QM9}
\label{app:qm9_details}

\paragraph{Dataset details}
The QM9 dataset \citep{ramakrishnan2014quantum} is a widely-used benchmark in computational chemistry and machine learning. It contains 134k stable small organic molecules with up to 9 heavy atoms (C, O, N, F) and up to 29 atoms including hydrogen. This size constraint allows the molecules to be exhaustively enumerated and have their quantum properties calculated accurately using density functional theory (DFT).
Each molecule in QM9 is specified by its Cartesian coordinates (in Angstroms) of all atoms at equilibrium geometry, along with 12 associated properties calculated from quantum mechanical simulations.

\paragraph{QM9 molecular properties}
In Table~\ref{table:qm9_properties}, we describe six quantum chemical properties that are used for conditional generation in our experiments.
\begin{table}[H]
\centering
\caption{Quantum chemical properties}
\vspace{5pt}
\begin{tabular}{|p{0.25\textwidth}|p{0.65\textwidth}|} 
\hline
\multicolumn{1}{|c|}{\textbf{Property}} & \multicolumn{1}{c|}{\textbf{Description}}  \\
\hline
\multirow{2}{*}{Polarizability ($\alpha$)} & Measure of a molecule's ability to form instantaneous dipoles in an external electric field. \\
\hline
\multirow{2}{*}{HOMO-LUMO gap ($\Delta \epsilon$)} & Energy gap between the HOMO and LUMO orbitals, indicating electronic excitation energies and chemical reactivity. \\
\hline
\multirow{2}{*}{HOMO energy ($\epsilon_{\text{HOMO}}$)} & Energy of the highest occupied molecular orbital, related to ionization potential and donor reactivity. \\
\hline
\multirow{2}{*}{LUMO energy ($\epsilon_{\text{LUMO}}$)} & Energy of the lowest unoccupied molecular orbital, related to electron affinity and acceptor reactivity. \\
\hline
\multirow{2}{*}{Dipole moment ($\mu$)} & Measure of the separation of positive and negative charges in a molecule, indicating polarity. \\
\hline
\multirow{2}{*}{Heat capacity ($C_v$)} & Measure of how much the temperature of a molecule changes when it absorbs or releases heat. \\
\hline
\end{tabular}
\label{table:qm9_properties}
\end{table}

\paragraph{Performance metrics}
To evaluate the quality of generated molecules, Table~\ref{table:performance_metrics} shows descriptions of the metrics that are used in our experiments to check whether generated samples satisfy basic molecular properties.

\begin{table}[h]
\centering
\caption{Performance metrics}
\vspace{5pt}
\label{table:performance_metrics}
\begin{tabular}
{|p{0.25\textwidth}|p{0.65\textwidth}|}
\hline
\multicolumn{1}{|c|}{\textbf{Property}} & \multicolumn{1}{c|}{\textbf{Description}}  \\ \hline
\multirow{3}{*}{Validity (Valid)} & Proportion of generated molecules that are chemically valid, as determined by RDKit. A molecule is valid if RDKit can parse it without encountering invalid valences. \\ \hline
\multirow{5}{*}{Atom Stability (AS)} & Percentage of atoms within generated molecules that possess correct valencies. Bond types are predicted based on atom distances and types, and validated against thresholds. An atom is stable if its total bond count matches the expected valency for its atomic number. \\ \hline
\multirow{2}{*}{Molecule Stability (MS)} & Percentage of generated molecules where all constituent atoms are stable. \\ \hline
\end{tabular}
\end{table}

\paragraph{Baselines}
\label{app:baselines} We compare our method against Equivariant Diffusion Models (EDM) \citep{hoogeboom2022equivariant} which learn a rotationally equivariant denoising process for property-conditioned generation and
Equivariant Energy Guided SDE (EEGSDE) \citep{bao2022equivariant},
which guides generation with a learned time-dependent energy function. 

\paragraph{Additional baselines}
Two baselines are employed following the approach outlined by \citep{hoogeboom2022equivariant}. To measure "Naive (U-Bound)", we disrupt any inherent correlation between molecules and properties by shuffling the property labels in $\mathcal{D}_b$ and evaluating $\phi_c$ on the modified dataset. "L-Bound" is a lower bound estimation of the predictive capability. This value is obtained by assessing the loss of $\phi_c$ on $\mathcal{D}_b$, providing a reference point for the minimum achievable performance. If a proposed model, denoted as, surpasses the performance of Naive (U-Bound), it indicates successful incorporation of conditional property information into generated molecules. Similarly, outperforming the L-Bound demonstrates the model's capacity to incorporate structural features beyond atom count and capture the intricacies of molecular properties. These baselines establish upper and lower bounds for evaluating mean absolute error (MAE) metrics in conditional generation tasks.

\paragraph{Diffusion model training}
\label{app:qm9_exp_details}
For a fair comparison with EDM and EEGSDE~\citep{hoogeboom2022equivariant,bao2022equivariant}, we adopt their training settings, using model checkpoints provided in the EEGSDE code: \url{https://github.com/gracezhao1997/EEGSDE}. The diffusion model is trained for 2000 epochs with a batch size of 64, learning rate of 0.0001, Adam optimizer, and an exponential moving average (EMA) with a decay rate of 0.9999. During evaluation, we generate molecules by first sampling the number of atoms $M \sim p(M)$ and the property value $\mathbf{c} \sim p(\mathbf{c}|M)$. Here $p(M)$ is the distribution of molecule sizes in the training data, and $p(\mathbf{c}|M)$ is the conditional distribution of the property given the molecule size. Then we generate a molecule conditioned on $M$ and $\mathbf{c}$ using the learned reverse process.

\paragraph{Hyperparameters for TACS}
We analyze different hyperparameter settings for all six quantum chemical properties ($\alpha$, $\Delta \epsilon$, $\epsilon_{\text{HOMO}}$, $\epsilon_{\text{LUMO}}$, $\mu$, $C_v$) for the result in the Table~\ref{Table:main_table_after_rebuttal}. We vary the TCS starting timestep $t_{\text{TCS}} \in \{200, 400, 600, 800\}$, online guidance starting timestep $t_{\text{OG}} \in \{200, 400, 600, 800\}$, online guidance ending timestep $\tilde{t}_{\text{OG}} \in \{10, 20, 30\}$, gradient clipping threshold $\kappa \in \{\infty, 1, 0.1\}$ for Eq.~\eqref{eq:online_guidance}, and guidance strength $z \in \{1.5, 1.0, 0.5\}$ in Eq.~\eqref{eq:OG_reverse_SDE}. Except for $\epsilon_{\text{LUMO}}$ (optimal with $z=0.5$, $\tilde{t}_{\text{OG}}=0$), we find $t_{\text{TCS}}=600$, $t_{\text{OG}}=600$, $\tilde{t}_{\text{OG}}=20$, $z=1$, and $\kappa=1$ consistently achieve low MAE with molecular stability above 80\%.

\subsection{Additional Experiment Details}

\subsubsection{Experiments on Geom-Drug}\label{app:Geom-Drug}

\paragraph{Dataset Details} Geom-Drug dataset~\citep{axelrod2022geom} contains approximately 450K molecules with up to 181 atoms and an average of 44.4 atoms per molecule. Following \citep{ bao2022equivariant}, we split the dataset into training/validation/test sets of 554k/70k/70k samples respectively.

\paragraph{Performance Metrics} For each experiment, we generate 10,000 molecular samples for evaluation. As in QM9 experiments, we measure atom stability (AS) and validity (Valid) using RdKit~\citep{landrum2016rdkit}. 

\paragraph{Time Predictor Training} For Geom-Drug, we employ TCS with 4 EGNN layers and 256 hidden features. The time predictor $\phi$ is trained unconditionally with the same EGNN architecture as QM9 for 10 epochs. During generation, TCS starts from timestep 600 and utilizes a window size of 10 for time correction.

\subsubsection{Experiments on Target Structure Generation}\label{app: target-structure}

\paragraph{Training Details} 
For molecular fingerprint-based structure generation, we follow the evaluation protocol in \citep{bao2022equivariant}. The diffusion model architecture remains consistent with our QM9 experiments, using EGNN with 256 hidden features and 4 layers, trained for 10 epochs. 

\paragraph{Performance Metrics}
For evaluation, we use Tanimoto similarity score~\citep{gebauer2022inverse} which measures the structural similarity between generated molecules and target structures through molecular fingerprint comparison. Specifically, let $S_g$ and $S_t$ be the sets of bits that are set to 1 in the fingerprints of generated and target molecules respectively. The Tanimoto similarity is defined as $|S_g \cap S_t|/|S_g \cup S_t|$, where $|\cdot|$ denotes the number of elements in a set. We evaluate the similarity on 10,000 generated samples.

\paragraph{Baselines} For both experiments, we directly compare with baseline results reported in \citep{hong2024fast}. For Geom-Drug unconditional generation, these include ENF~\citep{satorras2021n}, G-Schnet~\citep{gebauer2019symmetry}, GDM variants~\citep{xu2022geodiff}, EDM~\citep{hoogeboom2022equivariant}, and EDM-Bridge~\citep{wu2022diffusion}. For target structure generation, we compare against G-SchNet~\citep{gebauer2019symmetry}, GDM, GDM-AUG~\citep{xu2022geodiff}, Conditional EDM~\citep{hoogeboom2022equivariant}, and EEGSDE~\citep{bao2022equivariant} with various guidance scales.

\section{Additional experiments}
\label{app:additional experiments}

\subsection{When to apply TCS and OG}

We ablate when to start the time-corrected sampling (TCS) and online guidance (OG) during the reverse diffusion process. The notation $t_{\text{TCS}}$ and $t_{\text{OG}}$ indicates we apply TCS after timestep $t_{\text{TCS}}$ and OG after the timestep $t_{\text{OG}}$ in the reverse diffusion process. We report results for property $\epsilon_{\text{LUMO}}$ in Table~\ref{tab:tcs_og_ablation}. The result shows that applying TACS from early steps ($t=600,800)$ generates samples best satisfying the target condition but with less molecular stability and validity. In contrast, when we start applying TACS after later step $t=400$, generated samples have higher MAE, MS, and Validity. Interstingly, if we apply TACS only in the later part (after $t=200$), MAE, MS, and validity decreases again. We leave further investigation on this phenomenon and explanation for future works. 

In our experiments, we use $t_{\text{TCS}}=t_{\text{OG}}=600$ as our default setting.
\vspace{-5pt}
\begin{table}[h]
\centering
\caption{Ablation study on when to start time-corrected sampling (TCS) and online guidance (OG) during the reverse diffusion process for the target property $\epsilon_{\text{LUMO}}$. We use $z=1$ for the experiment. The best value in each column is bolded.}
\label{tab:tcs_og_ablation}
\vspace{5pt}
\begin{tabular}{lccc}
\toprule
$[t_{\text{TCS}}, t_{\text{OG}}]$ & MAE & MS (\%) & Valid (\%) \\
\midrule
$[800, 800]$ &\textbf{236} & 74.9 & 86.3 \\
$[600, 600]$ & \textbf{236} & 74.9 & 86.2
\\
$[400, 400]$ & 360 & \textbf{86.6} & \textbf{93.3} 
\\
$[200, 200]$ & 248 & 72.1 & 84.5 
\\
\bottomrule
\end{tabular}
\end{table}

\subsection{Number of MC samples}
Additional experiments are conducted on the effect of number of MC samples \( m \) in Eq.~\eqref{eq:online_guidance}.
LGD~\citep{song2023loss} estimates $\mathbf{x}_0$ assuming $q(\mathbf{x}_0|\mathbf{x}_t)$ is a normal distribution with mean $\hat{\mathbf{x}}_0$ (Eq.~\ref{eq:Tweedie}) and variance $\sigma^2$ which is a hyperparameter. 

First, we investigate the effect of varying the variance $\sigma$ in MC sampling. Table~\ref{tab:MC_ablation} shows that the result is robust in the small values of $\sigma$ but when $\sigma$ is larger than some point, quality of generated samples decreases (higher MAE and lower MS).

Next, we test how the performance of TACS is affected by number of MC samples $m$. Table~\ref{tab:MC_ablation_2} shows that performance of TACS is robust in number of MC samples but we did not observe any performance increase with the number of MC samples as in~\citep{han2023training}.

\begin{table}[h]
    \vspace{-5pt}
    \centering
    \begin{minipage}[t]{1.0\linewidth}
        \centering
        \caption{MC effect with varying $\sigma$. 5 MC samples are used and the target property is $\alpha$.}
        \vspace{5pt}
        \label{tab:MC_ablation}
        \resizebox{0.3\linewidth}{!}{
            \renewcommand{\arraystretch}{0.8}
            \renewcommand{\tabcolsep}{6pt}
            \begin{tabular}{lcc}
                \toprule
                $\sigma$ & MAE & MS (\%) \\
                \midrule
                0.0001 & 1.506 & 86.06 \\
                0.0005 & 1.390 & 84.33 \\
                %0.0025 & 1.478 & 83.27 \\
                0.001 & 1.501 & 86.92 \\
                0.005 & 1.395 & 86.35 \\
                %0.0075 & 1.532 & 86.35 \\
                0.01 & 1.464 & 85.10 \\
                0.05 & 1.801 & 85.04 \\
                0.1 & 2.186 & 82.69 \\
                0.3 & 2.936 & 75.67 \\
                \bottomrule
            \end{tabular}
        }
    \end{minipage}%
    \hspace{0.1\linewidth}%
    \begin{minipage}[t]{1.0\linewidth}
        \centering
        \caption{Varying number of MC samples with $\sigma=0.005$. Target property is $\alpha$.}
        \vspace{5pt}
        \label{tab:MC_ablation_2}
        \resizebox{0.4\linewidth}{!}{
        {\small
            \renewcommand{\arraystretch}{0.9}
            \renewcommand{\tabcolsep}{8pt}
            \begin{tabular}{ccc}
                \toprule
                \# Samples & MAE & MS (\%) \\
                \midrule
                1 & 1.440 & 86.10 \\
                5 & 1.395 & 86.35 \\
                10 & 1.505 & 82.21 \\
                15 & 1.545 & 83.04 \\
                20 & 1.468 & 86.76 \\
                \bottomrule
            \end{tabular}
        }}
    \end{minipage}%
\end{table}

\subsection{Results on Novelty, Uniqueness}
We report additional metrics on the novelty, uniqueness of generated molecules in Table \ref{tab:additional_metrics} following previous literature \citep{ bao2022equivariant, hoogeboom2022equivariant, xu2023geometric}. Novelty measures the percentage of generated molecules not seen in the training set. Uniqueness measures the proportion of non-isomorphic graphs within valid molecules. Higher values indicate better quality for both metrics. The result shows that TACS generates molecules with decreased novelty. This shows that TACS is effective in making generated molecules that stick to the original data distribution while satisfying to meet the target condition. 

\begin{table}[h]
\centering
\caption{Novelty and uniqueness of generated sample for the target property $\epsilon_{\text{LUMO}}$.}
\label{tab:additional_metrics}
\vspace{5pt}
\begin{tabular}{lccc}
\toprule
Method & Novelty (\%) & Uniqueness (\%) \\
\midrule
EDM & 84.5 & 99.9  \\
EEGSDE & 84.8 & 84.8\\
TACS & 71.6 & 99.8 \\
\bottomrule
\end{tabular}
\end{table}

\section{Mathematical Derivations}
In our derivation of equivariance properties of the time predictor, we closely adhere to the formal procedures outlined in~\citep{hoogeboom2022equivariant, xu2022geodiff, satorras2021n}.

\begin{definition}[E(3) Equivariance]
A function $f: \mathbb{R}^{N \times 3} \to \mathbb{R}^{N \times d}$ is E(3)-equivariant if for any orthogonal matrix $\mathbf{R} \in \mathbb{R}^{3 \times 3}$ and translation vector $\mathbf{v} \in \mathbb{R}^3$,
\begin{equation}
f(\mathbf{R}\mathbf{X} + \mathbf{v}\mathbf{1}^\top) = \mathbf{R} \cdot f(\mathbf{X}),
\end{equation}
where $\mathbf{X} \in \mathbb{R}^{N \times 3}$ and $\mathbf{1} \in \mathbb{R}^N$ is the all-ones vector.
\end{definition}

\begin{definition}[Permutation Invariance]
A function $g: \mathbb{R}^{N \times d} \to \mathbb{R}^d$ is permutation-invariant if for any permutation matrix $\mathbf{P} \in \{0,1\}^{N \times N}$,
\begin{equation}
g(\mathbf{P}\mathbf{H}) = g(\mathbf{H}),
\end{equation}
where $\mathbf{H} \in \mathbb{R}^{N \times d}$.
\end{definition}

\begin{proposition}
Let $f: \mathbb{R}^{N \times 3} \to \mathbb{R}^{N \times d}$ be an E(3)-equivariant function and $g: \mathbb{R}^{N \times d} \to \mathbb{R}^d$ be a permutation-invariant function. Then, the composition $h = g \circ f: \mathbb{R}^{N \times 3} \to \mathbb{R}^d$ is invariant to E(3) transformations, i.e.,
\begin{equation}
h(\mathbf{R}\mathbf{X} + \mathbf{v}\mathbf{1}^\top) = h(\mathbf{X}).
\end{equation}
\end{proposition}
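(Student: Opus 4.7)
The plan is to unfold the composition $h = g \circ f$ and apply the two hypotheses in sequence. Writing out the left-hand side and then invoking the E(3)-equivariance of $f$ gives
\[
h(\mathbf{R}\mathbf{X} + \mathbf{v}\mathbf{1}^\top) = g\bigl(f(\mathbf{R}\mathbf{X} + \mathbf{v}\mathbf{1}^\top)\bigr) = g\bigl(\mathbf{R}\cdot f(\mathbf{X})\bigr).
\]
Translation invariance is absorbed at this stage, since the equivariance definition already collapses $\mathbf{v}\mathbf{1}^\top$ into a pure left action of $\mathbf{R}$ on the output of $f$. The entire problem is therefore reduced to showing $g(\mathbf{R}\cdot\mathbf{H}) = g(\mathbf{H})$ for every $\mathbf{H}\in\mathbb{R}^{N\times d}$ and every orthogonal $\mathbf{R}$.

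\textbf{The main obstacle.} This last step is the delicate one and is the obstacle I anticipate. The stated hypothesis on $g$ is only permutation invariance, $g(\mathbf{P}\mathbf{H}) = g(\mathbf{H})$, which concerns left-multiplication by a permutation matrix reshuffling the $N$ rows; the action of $\mathbf{R}$ by contrast mixes columns (rotating each per-atom feature vector). To close this gap I would appeal to the structural form of the readout used by the time predictor: the aggregator $g$ built on top of EGNN layers is constructed from scalar invariants of the atomic features (e.g.\ norms $\|f(\mathbf{X})_i\|$, pairwise inner products $\langle f(\mathbf{X})_i, f(\mathbf{X})_j\rangle$, or the already invariant scalar embeddings produced by EGNN message passing), and then pools these quantities across atoms in a permutation-symmetric way (sum, mean, or max). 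Orthogonality of $\mathbf{R}$ preserves every such bilinear invariant, so $g(\mathbf{R}\cdot\mathbf{H}) = g(\mathbf{H})$ follows from this structural form, while permutation invariance covers the aggregation across atoms.

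\textbf{Concluding the chain.} Once that invariance step is in hand, the proof closes immediately by concatenating the three equalities:
\[
h(\mathbf{R}\mathbf{X} + \mathbf{v}\mathbf{1}^\top) = g\bigl(\mathbf{R}\cdot f(\mathbf{X})\bigr) = g\bigl(f(\mathbf{X})\bigr) = h(\mathbf{X}),
\]
which gives the claimed E(3)-invariance of $h$. The algebraic manipulation itself is only a one-line chain; the substantive content of the proof is justifying that the permutation-invariant head combined with the equivariant backbone yields a genuinely rotation- and translation-invariant scalar, which is exactly what the time predictor needs in order to assign the same timestep distribution to molecules that differ only by a rigid motion.
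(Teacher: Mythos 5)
Your algebraic skeleton is exactly the same as the paper's: unfold $h = g\circ f$, apply equivariance of $f$ to absorb $\mathbf{v}\mathbf{1}^\top$ and push $\mathbf{R}$ out, then cancel $\mathbf{R}$ using a property of $g$. Where you depart from the paper, interestingly, is in recognizing that the cancellation step is not actually licensed by the stated hypotheses, and here you are right and the paper is not.

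The paper's own proof reads $g(\mathbf{R}\cdot f(\mathbf{X})) = g(f(\mathbf{X}))$ and annotates this equality as ``Permutation invariance of $g$.'' That annotation is a non-sequitur. Permutation invariance as defined in the paper is $g(\mathbf{P}\mathbf{H}) = g(\mathbf{H})$ with $\mathbf{P}\in\{0,1\}^{N\times N}$ acting on the $N$ rows (atoms), whereas $\mathbf{R}\in\mathbb{R}^{3\times 3}$ is an orthogonal matrix acting on a completely different axis (the spatial index), and there is no reason a generic permutation-invariant $g$ should also be $\mathrm{O}(3)$-invariant --- $g(\mathbf{H}) = \sum_i \mathbf{H}_i$ is permutation-invariant but manifestly not orthogonally invariant. (There is also a latent type mismatch in the paper: $\mathbf{R}\cdot f(\mathbf{X})$ for $f(\mathbf{X})\in\mathbb{R}^{N\times d}$ and $\mathbf{R}\in\mathbb{R}^{3\times 3}$ only parses if $d=3$ or if $\mathbf{R}$ is understood to act only on a distinguished spatial part of the feature; neither caveat is stated.) So the Proposition as written is not a theorem --- it requires an additional hypothesis on $g$, namely $\mathrm{O}(3)$-invariance (or, equivalently, that $g$ factors through orthogonal invariants such as norms and pairwise inner products of the per-atom feature vectors).

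Your repair --- appealing to the structural form of the actual readout, which pools invariant scalars over atoms --- is precisely the missing ingredient, and it is in fact what the subsequent Theorem implicitly assumes when it speaks of ``the invariance of $\psi$ to orthogonal transformations.'' So your instinct was correct: the one-line chain is fine, the delicate part is exactly the step you isolated, and it needs an extra hypothesis that the Proposition omits. If anything, you should resist the urge to paper over the gap with an appeal to the concrete architecture and instead record the needed hypothesis explicitly: strengthen ``permutation-invariant'' to ``permutation- and $\mathrm{O}(3)$-invariant in the appropriate axes,'' and then your three-line chain closes cleanly.
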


\begin{proof}
For any orthogonal matrix $\mathbf{R} \in \mathbb{R}^{3 \times 3}$ and translation vector $\mathbf{v} \in \mathbb{R}^3$,
\begin{align}
h(\mathbf{R}\mathbf{X} + \mathbf{v}\mathbf{1}^\top)
&= g(f(\mathbf{R}\mathbf{X} + \mathbf{v}\mathbf{1}^\top)) \notag \\
&= g(\mathbf{R} \cdot f(\mathbf{X})) \quad \text{(E(3) equivariance of } f \text{)} \notag \\
&= g(f(\mathbf{X})) \quad \text{(Permutation invariance of } g \text{)} \notag \\
&= h(\mathbf{X}).
\end{align}
\end{proof}

\begin{theorem}[Time Predictor Equivariance]
Let $\mathcal{G} = (\mathcal{V}, \mathcal{E})$ be a graph representing a molecule, where $\mathcal{V} = \{1, \ldots, N\}$ is the set of nodes (atoms) and $\mathcal{E} \subseteq \mathcal{V} \times \mathcal{V}$ is the set of edges (bonds). Let $\mathbf{X}_t \in \mathbb{R}^{N \times 3}$ denote the atomic coordinates and $\mathbf{c} \in \mathbb{R}^d$ be a condition vector at diffusion timestep $t$. Consider a time predictor $p_\phi(t | \mathbf{X}_t, \mathbf{c}) = \mathrm{softmax}(f_\phi(\mathbf{X}_t, \mathbf{c}))$ parameterized by a composition of an E(3)-equivariant graph neural network $\text{EGNN}_\phi: \mathbb{R}^{N \times 3} \times \mathbb{R}^{N \times d_h} \times \mathbb{R}^d \to \mathbb{R}^{N \times d'}$, a permutation-invariant readout function $\rho: \mathbb{R}^{N \times d'} \to \mathbb{R}^{d'}$, and a multilayer perceptron $\psi: \mathbb{R}^{d'} \to \mathbb{R}^T$, i.e.,
\begin{equation}
f_\phi(\mathbf{X}_t, \mathbf{c}) = \psi(\rho(\text{EGNN}_\phi(\mathbf{X}_t, \mathbf{H}_t, \mathbf{c}))),
\end{equation}
where $\mathbf{H}_t \in \mathbb{R}^{N \times d_h}$ are node features and $T$ is the total number of diffusion timesteps.
Then, the time predictor $p_\phi(t | \mathbf{X}_t, \mathbf{c})$ is invariant to E(3) transformations, i.e., for any orthogonal matrix $\mathbf{R} \in \mathbb{R}^{3 \times 3}$ and translation vector $\mathbf{v} \in \mathbb{R}^3$,
\begin{equation}
p_\phi(t | \mathbf{R}\mathbf{X}_t + \mathbf{v}\mathbf{1}^\top, \mathbf{c}) = p_\phi(t | \mathbf{X}_t, \mathbf{c}), \quad \forall t \in \{1, \ldots, T\}.
\end{equation}
\end{theorem}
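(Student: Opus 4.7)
The plan is to invoke Proposition~1 of the paper (composition of an E(3)-equivariant map with a permutation-invariant readout is E(3)-invariant) applied to $\rho \circ \text{EGNN}_\phi$, and then simply propagate this invariance through the remaining deterministic layers $\psi$ and $\mathrm{softmax}$. Fix an arbitrary orthogonal $\mathbf{R} \in \mathbb{R}^{3 \times 3}$ and translation $\mathbf{v} \in \mathbb{R}^3$, and write $\mathbf{X}'_t = \mathbf{R}\mathbf{X}_t + \mathbf{v}\mathbf{1}^\top$. The target identity $p_\phi(t \mid \mathbf{X}'_t, \mathbf{c}) = p_\phi(t \mid \mathbf{X}_t, \mathbf{c})$ then follows by verifying invariance at each stage of the pipeline.

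First, by the E(3)-equivariance of $\text{EGNN}_\phi$ (a hypothesis supplied by the standard construction of \citep{satorras2021n}), together with the permutation invariance of the readout $\rho$, Proposition~1 gives
\[
\rho\bigl(\text{EGNN}_\phi(\mathbf{X}'_t, \mathbf{H}_t, \mathbf{c})\bigr) = \rho\bigl(\text{EGNN}_\phi(\mathbf{X}_t, \mathbf{H}_t, \mathbf{c})\bigr).
\]
Second, because $\psi$ is a deterministic function of its input, applying it to both sides preserves the equality, so $f_\phi(\mathbf{X}'_t, \mathbf{c}) = f_\phi(\mathbf{X}_t, \mathbf{c})$. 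Third, $\mathrm{softmax}$ is likewise a deterministic map on logits, so applying it entrywise gives $p_\phi(t \mid \mathbf{X}'_t, \mathbf{c}) = p_\phi(t \mid \mathbf{X}_t, \mathbf{c})$ for every $t \in \{1,\dots,T\}$, which is the claim. The condition vector $\mathbf{c}$ participates only as an additional input that is unaffected by the geometric action on $\mathbf{X}_t$, so it passes through the chain untouched.

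The main subtlety I expect to address is the interface between Proposition~1 and the EGNN output. Proposition~1 is phrased as if the equivariant map's output is rotated by $\mathbf{R}$, but the readout $\rho$ in a time predictor operates on the \emph{invariant} (scalar) feature channels of EGNN, on which $\mathbf{R}$ acts trivially; the coordinate channels (on which $\mathbf{R}$ acts nontrivially) are either discarded or pooled in a way that is unaffected. The cleanest way to handle this is to note that the trivial action of $\mathrm{O}(3)$ is a valid group action, so a feature-only EGNN output is automatically covered by the equivariance hypothesis of Proposition~1. Once that observation is made explicit, the rest of the argument is a routine chaining of deterministic post-processing, and no further calculation is required.
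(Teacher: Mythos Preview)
Your proposal is correct and follows essentially the same route as the paper: establish E(3)-equivariance of the EGNN layer, absorb the residual $\mathbf{R}$-action via the readout $\rho$, and then push the resulting equality through the deterministic maps $\psi$ and $\mathrm{softmax}$. The only difference is packaging---you invoke Proposition~1 for the $\rho \circ \text{EGNN}_\phi$ step, whereas the paper repeats that argument inline---and your explicit remark that $\rho$ reads only the invariant feature channels (so $\mathbf{R}$ acts trivially there) is in fact a cleaner justification than the paper's own appeal to ``permutation invariance of $\rho$'' at that step.
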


\begin{proof}
The E(3)-equivariant graph neural network $\text{EGNN}_\phi$ satisfies \citep{satorras2021n}:
\begin{equation}
\text{EGNN}_\phi(\mathbf{R}\mathbf{X}_t + \mathbf{v}\mathbf{1}^\top, \mathbf{H}_t, \mathbf{c}) = \mathbf{R} \cdot \text{EGNN}_\phi(\mathbf{X}_t, \mathbf{H}_t, \mathbf{c}).
\end{equation}
By the permutation invariance of $\rho$ and the invariance of $\psi$ to orthogonal transformations, we have:
\begin{align}
f_\phi(\mathbf{R}\mathbf{X}_t + \mathbf{v}\mathbf{1}^\top, \mathbf{c})
&= \psi(\rho(\text{EGNN}_\phi(\mathbf{R}\mathbf{X}_t + \mathbf{v}\mathbf{1}^\top, \mathbf{H}_t, \mathbf{c}))) \notag \\
&= \psi(\rho(\mathbf{R} \cdot \text{EGNN}_\phi(\mathbf{X}_t, \mathbf{H}_t, \mathbf{c}))) \notag \\
&= \psi(\rho(\text{EGNN}_\phi(\mathbf{X}_t, \mathbf{H}_t, \mathbf{c}))) \notag \\
&= f_\phi(\mathbf{X}_t, \mathbf{c}).
\end{align}
Consequently,
\begin{align}
p_\phi(t | \mathbf{R}\mathbf{X}_t + \mathbf{v}\mathbf{1}^\top, \mathbf{c})
&= \mathrm{softmax}(f_\phi(\mathbf{R}\mathbf{X}_t + \mathbf{v}\mathbf{1}^\top, \mathbf{c})) \notag \\
&= \mathrm{softmax}(f_\phi(\mathbf{X}_t, \mathbf{c})) \notag \\
&= p_\phi(t | \mathbf{X}_t, \mathbf{c}).
\end{align}
\end{proof}

\section{Comparison with other related works}
While with different motivations and methods, here, we list some of the relevant works and compare their algorithms with ours.  

\paragraph{Comparison with DMCMC}
DMCMC~\citep{kim2022denoising} trains a classifier to predict noise levels of the given data during the reverse diffusion process which is similar to our time predictor. However, they use the classifier to estimate the current noise state when conducting MCMC on the product space of the data and the noise. In contrast, our time predictor directly predicts timesteps for correction in diffusion sampling itself. Moreover, while the purpose of noise prediction in DMCMC is for fast sampling, our work use time predictor to accurately produce samples from the desired data distribution.

\paragraph{Comparison with TS-DPM} Time-Shift Sampler~\citep{li2023alleviating} targets to reduce exposure bias targets similar approach of fixing timesteps during the inference as our time correction method. However, while our method directly selects timesteps based on the time predictor, which has demonstrated robustness in our experiments \citep{li2023alleviating} selects timesteps by calculating variance of image pixels at each step and matching the noise level from the predefined noise schedule which is often inaccurate and expensive. Moreover corrected timestep in~\citep{li2023alleviating} is used directly for the start of the next step, while our approach maintains the predefined timestep after accurately estimate clean sample by corrected time step (line 9 in Algorithm~\ref{alg:MACS_main}). By taking every single diffusion step while carefully using predicted time, TACS / TCS can generate samples closer to the target distribution.

To further validate our approach, we provide additional experimental results comparing TS-DDPM~\citep{li2023alleviating} and TACS on the QM9 dataset with step size 10. The result in Table~\ref{table: Timeshitsampler_comparison} shows consistent improvements across various quantum properties which shows the robustness of our approach.

\vspace{10pt}
\begin{table}[h]
\centering
\caption{Comparison between TACS and Time shift sampler on conditional molecular generation.}
\vspace{5pt}
\label{table: Timeshitsampler_comparison}
\begin{tabular}{lcccc}
\toprule
& \multicolumn{2}{c}{TS-DDPM} 
& \multicolumn{2}{c}{TACS (ours)} 
\\ % Removed \mu (D) as it wasn't used
\cmidrule(l{2pt}r{2pt}){2-3}
\cmidrule(l{2pt}r{2pt}){4-5}
Method
& MAE & MS (\%)
& MAE & MS (\%)
\\ % Added MAE, MS (%), and a for alpha
\midrule
$C_v$  & 1.066 & 74.89 & 0.659 & 83.6          \\
$\mu$ & 1.166  & 73.55 & 0.387 & 83.3       \\ 
$\alpha$ & 2.777 & 75.20  & 1.44 & 86.0      \\ 
$\Delta(\epsilon)$ & 665.3 & 82.72 & 332    & 88.8     
\\ 
HOMO & 371.8 & 72.74 & 168 & 87.3           \\ 
LUMO & 607.6 & 74.98 & 289 & 82.7           \\ 
\bottomrule
\end{tabular}
\end{table}

\newpage
\section{Visualization of generated molecules}

\paragraph{Conditional generation with target quantum chemical property}
\hfill

\begin{figure}[H]
\centering
\includegraphics[width=1.0\linewidth]{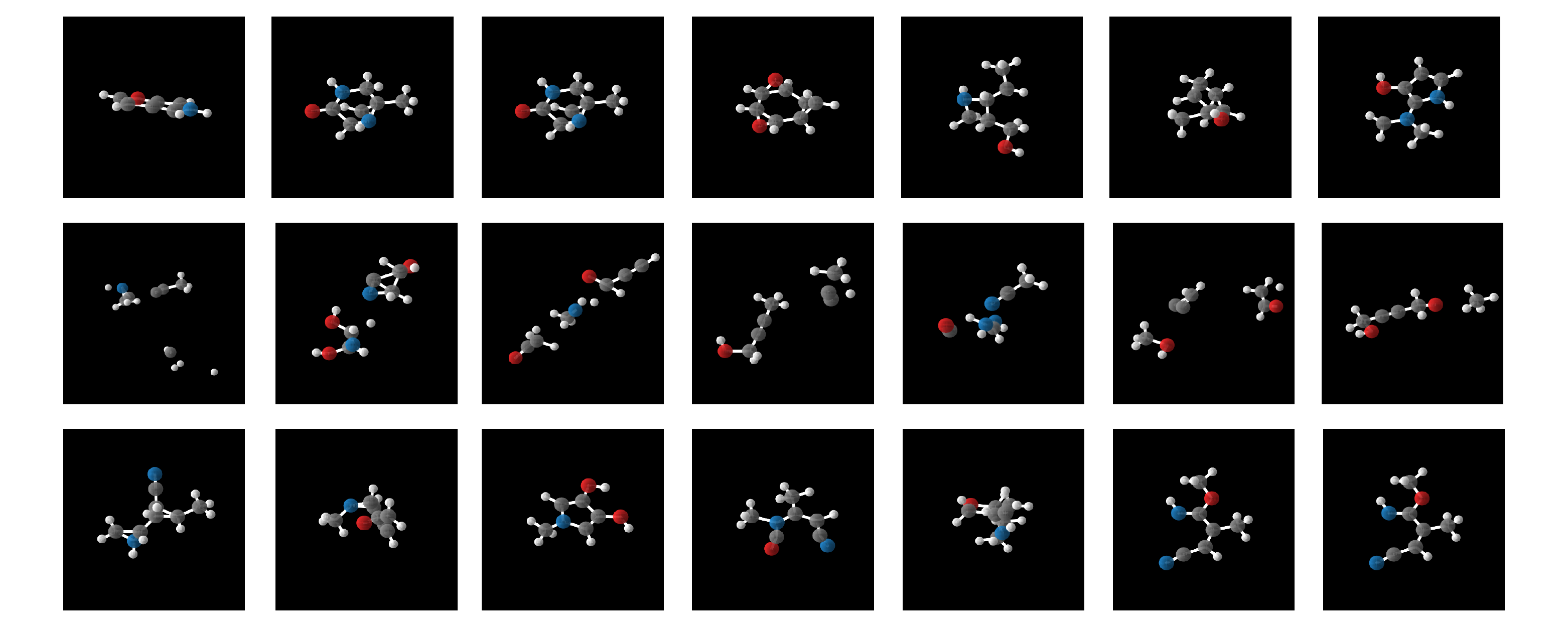} 
\caption{Conditional generation of target property $\alpha$ on QM9. Visualization of molecules generated by TCS (top), online guidance (middle), and TACS (bottom).}
\label{fig:qm9_visualization}
\end{figure}

\paragraph{Conditional generation with target structure}
\hfill
\begin{figure}[H]
\centering
\includegraphics[width=1.0\linewidth]{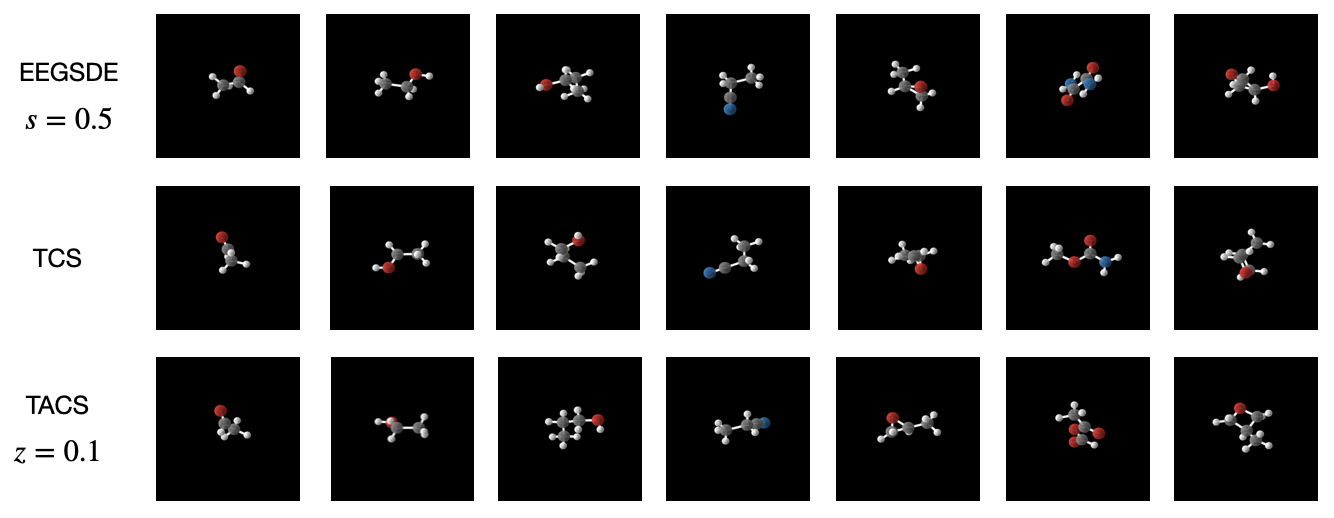} 
\caption{Visualization of how generated molecules align with target structures.}
\label{fig:}
\end{figure}

\paragraph{Unconditional generation on Geom-Drug}
\hfill
\begin{figure}[H]
\centering
\includegraphics[width=1.0\linewidth]{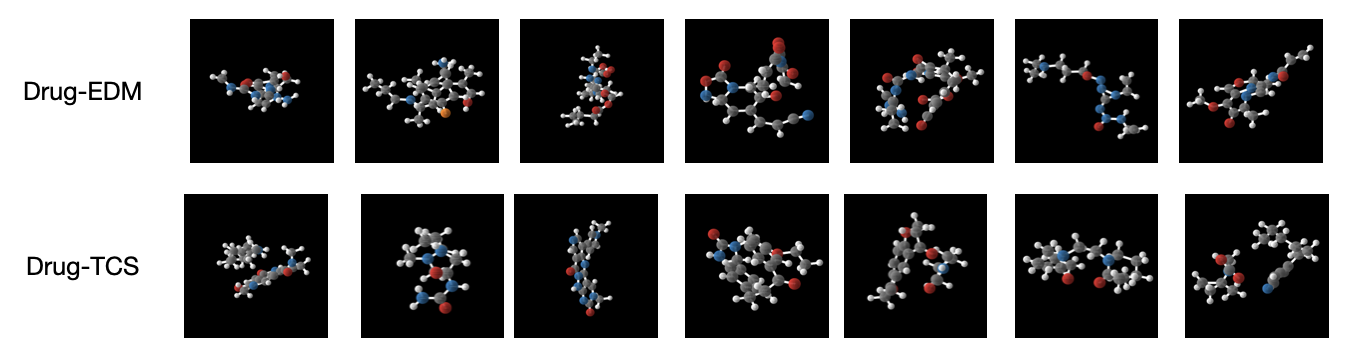} 
\caption{Selection of samples generated by the denoising process of EDM and TCS in Geom-Drug.}
\label{fig:Geom-Drug visualization}
\end{figure}

}

\end{document}